\newtheorem{thm}{Theorem}[section]
\newtheorem{cor}[thm]{Corollary}
\newtheorem{lem}[thm]{Lemma}
\newtheorem{obs}[thm]{Observation}
\theoremstyle{definition}
\newtheorem{defn}[thm]{Definition}
\theoremstyle{remark}
\numberwithin{equation}{section}
\def\R{\mathbb{R}}
\def\E{\mathbb{E}}
\def\eps{\varepsilon}
\def\argmax{\operatorname{argmax}}
\def\R{\mathbb{R}}
\def\philin{{\phi_{\operatorname{lin}}}}
\def\rst{{\operatorname{reset}}}
\def\fdbck{{\operatorname{feedback}}}
\def\advnce{{\operatorname{advance}}}
\def\confballone{{\sc ConfidenceBall}$_1$\xspace}
\def\confballtwo{{\sc ConfidenceBall}$_2$\xspace}
\def\eqdef{:=}
\def\Umean{{U^{\operatorname{av}}}}
\def\Uchoice{{U^{\operatorname{choice}}}}
\def\Rmean{{R^{\operatorname{av}}}}
\def\Umeant{{U_t^{\operatorname{av}}}}
\def\Uchoicet{{U_t^{\operatorname{choice}}}}
\def\Rmeant{{R_t^{\operatorname{av}}}}
\def\Rchoicet{{R_t^{\operatorname{choice}}}}
\newcommand{\newtau}{\rho}
\newcommand{\secref}[1]{Sec.~\ref{#1}}
\newcommand{\dotprod}[2]{\left \langle #1, #2 \right \rangle}
\newcommand{\kc}[1]{}
\newcommand{\tj}[1]{}
\newcommand{\InF}{{\bf IF}}
\newcommand{\BTMB}{{\bf BTMB}}
\newcommand{\MultiSbm}{{\bf MultiSBM}}
\newcommand{\Doubler}{{\bf Doubler}}
\newcommand{\DoubleSbm}{{\bf Sparring}}
\begin{document} 

\title{Reducing Dueling Bandits to Cardinal Bandits}
\author{
Nir Ailon \\
CS Dept. Technion \\
Haifa, Israel \\
\texttt{nailon@cs.technion.ac.il}
\and
Zohar Karnin \\
Yahoo! Research \\
Haifa, Israel \\
\texttt{zkarnin@gmail.com}
\and
Thorsten Joachims \\
Cs Dept. Cornell \\
Ithaca, NY \\
\texttt{tj@cs.cornell.edu}
}
\maketitle


\begin{abstract} 
We present algorithms for reducing the Dueling Bandits problem to the conventional (stochastic) Multi-Armed Bandits problem. The Dueling Bandits problem is an online model of learning with ordinal feedback of the form ``A is preferred to B'' (as opposed to cardinal feedback like ``A has value 2.5''), giving it wide applicability in learning from implicit user feedback and revealed and stated preferences. In contrast to existing algorithms for the Dueling Bandits problem, our reductions -- named $\Doubler$, $\MultiSbm$ and $\DoubleSbm$ -- provide a generic schema for translating the extensive body of known results about conventional Multi-Armed Bandit algorithms to the Dueling Bandits setting.   
For $\Doubler$ and $\MultiSbm$ we prove regret upper bounds in both finite and infinite settings, and conjecture about the performance of $\DoubleSbm$ which empirically outperforms the other two as well as previous algorithms in our experiments.  In addition, we provide the first almost optimal regret bound in terms of second order terms, such as the differences between the values of the arms.
\end{abstract}

\section{Introduction}\label{sec:intro}

When interacting with an online system, users reveal their preferences through the choices they make. Such a choice -- often termed implicit feedback -- may be the click or tap on a particular link in a web-search ranking, or watching a particular movie among a set of recommendations. Connecting to a classic body of work in econometrics and empirical work in information retrieval \cite{Joachims/etal/07a}, such implicit feedback is typically viewed as an ordinal preference between alternatives (i.e., ``A is better than B''), but it does not provide reliable cardinal valuations (i.e., ``A is very good, B is mediocre'').

To formalize the problem of learning from preferences, we consider the following interactive online learning model, which we call the {\bf Utility-Based Dueling Bandits Problem (UBDB)} similar to \cite{DBLP:journals/jcss/YueBKJ12,DBLP:conf/icml/YueJ11}. At each iteration $t$, the learning system presents \emph{two} actions  $x_t,y_t\in X$ to the user, where $X$ is the set (either finite or infinite) of possible actions.  Each of the two actions has an associated random reward (or utility) for the user, which we denote by
 $u_t$ and $v_t$, respectively.  The quantity $u_t$ (resp. $v_t$) is drawn from a distribution that depends on $x_t$  (resp. $y_t$) only.     We assume these utilities are in $[0,1]$. 
The learning system is rewarded the average utility $\Umeant = (u_t+v_t)/2$ of the two actions it presents, but \emph{it does not observe this reward}.  Instead, it only observes the user's binary choice among the two alternative actions $x_t, y_t$, which depends on the respective utilities $u_t$ and $v_t$. In particular, we model the observed choice as a $\{0,1\}$-valued random variable $b_t$ distributed as
\begin{eqnarray}
 \Pr[b_t=0 | (u_t, v_t)] &=& \phi(u_t, v_t) \nonumber \\
Pr[b_t=1 | (u_t, v_t)] &=& \phi(v_t, u_t)\ , \label{defchoice}
\end{eqnarray}
where $\phi : [0,1]\times [0,1]\mapsto [0,1]$ is a \emph{link} function.  Clearly, the link function has to satisfy $\phi(A,B)+\phi(B,A) = 1$. Below we concentrate on linear link functions (defined in \secref{sec:def}).
The binary choice is interpreted as a stochastic preference response between the \emph{left alternative} $x_t$ (if $b_t=0$) and the \emph{right alternative} $y_t$ (if $b_t=1$). The utility $\Umean$ captures the overall latent user experience from the pair of alternatives. A concrete example of this UBDB game is learning for web search, where $X$ is a set of ranking functions among which the search engine selects two for each incoming query; the search engine then presents an interleaving \cite{Chapelle/etal/12a} of the two rankings, from which it can sense a stochastic preference between the two ranking functions based on the user's clicking behavior.


The purpose of this paper is to  show how  UBDB  can be reduced to the conventional (cardinal) stochastic Multi-Armed Bandit (MAB) problem\footnote{One armed bandit is a popular slang for slot machines in casinos, and the MAB game describes the problem faced by a gambler who can choose one machine to play at each instance.}, which has been studied since 1952~\cite{citeulike:761755}.
In MAB, the system chooses only a single action $x_t\in X$ in each round and directly observes its cardinal reward $u_t$,
which is assumed to be drawn from a latent but fixed distribution attached to $x_t$.
The set $X$ in the traditional MAB game is of finite cardinality $K$.  In more general settings~\cite{DBLP:conf/colt/DaniHK08,DBLP:conf/nips/MannorS11}, this set can be infinite but structured in some way. \citet{DBLP:conf/colt/DaniHK08}, for example, assume a stochastic setting in which $X$ is a convex, bounded subset of $\R^n$, and the expectation $\mu(x)$ of the corresponding value distribution is $\dotprod{\mu}{x}$, where $\mu\in \R^n$ is an unknown coefficient vector and $\dotprod{\cdot}{\cdot}$ is the inner product with respect to the standard basis. We refer to this as the \emph{linear expected utility setting}. We study here both the finite setting and the infinite setting.  

{\bf Main results.}
We provide general reductions from UBDB to MAB.   More precisely, we use a MAB strategy as a black-box for helping us play the UBDB game.  The art is in exactly how to use a black-box designed for MAB in order to play UBDB.
We present one method, $\Doubler$ (Section~\ref{sec:infinite})  which adds an extra  $O(\log T)$ factor to the expected regret function compared to that of the MAB black-box, assuming the MAB black-box has polylogarithmic (in $T$) regret, where $T$ is the time horizon. When the MAB black-box has polynomial regret, only an extra $O(1)$ factor is incurred.  This algorithm works for infinite bandit spaces.
We also present a reduction algorithm $\MultiSbm$ (Section~\ref{sec:stochastic_finite}) 
which works for finite bandit spaces and gives an $O(\log T$) regret, assuming
the MAB black-box enjoys an $O(\log T)$ expected  regret function with some mild higher moment assumptions.  These assumptions are satisfied, for example, by the seminal UCB algorithm \cite{Auer:2002:FAM:599614.599677}.
Our analysis in fact shows that for sufficiently large $T$, the regret of $\MultiSbm$ is asymptotically identical to that of UCB not only in terms of the time horizon $T$ but in terms of second order terms such as the differences between the values of the arms; it follows that $\MultiSbm$ is asymptotically optimal in the second order terms as well as in $T$.
Finally, we propose a third algorithm $\DoubleSbm$ (Section~\ref{sec:double sbm}) which we conjecture
to enjoy regret bounds comparable to those of the MAB algorithms hiding in the black boxes it uses.  We base the conjecture
on arguments about  a related, but different problem.  In experiments (Section~\ref{sec:experiments}) comparing our reductions with special-purpose UBDB algorithms,
 $\DoubleSbm$ performs clearly the best, further supporting our conjecture.

 All  results in this extended abstract assume the linear link function (see Section~\ref{sec:def}), but we also show preliminary results for other interesting link functions in Appendix~\ref{sec:more_general}.

{\bf Contributions in relation to previous work.} While specific algorithms for specific cases of the Dueling Bandits problem already exist \cite{DBLP:journals/jcss/YueBKJ12,DBLP:conf/icml/YueJ11,Yue/Joachims/09a}, our reductions provide a general approach to solving the UBDB. In particular, this paper provides general reductions that make it possible to transfer the large body of MAB work on exploiting structure in $X$ to the dueling case in a constructive and algorithmic way. Second, despite the generality of the reductions their regret is asymptotically comparable to the tournament elimination strategies in \cite{DBLP:journals/jcss/YueBKJ12,DBLP:conf/icml/YueJ11} for the finite  case as $T\rightarrow \infty$, and better than the regret of the online convex optimzation algorithm of \cite{Yue/Joachims/09a} for the infinite case (albeit in a more restricted setting).

In our  setting, the reward and feedback of the agent playing the online game are, in some sense, \emph{orthogonal} to each other, or \emph{decoupled}. A different type of decoupling was also considered in Avner et al.'s work  \cite{DBLP:conf/icml/AvnerMS12}, although this work cannot be compared to theirs.  
There is yet more work on bandit games where the algorithm plays two bandits (or more) in each iteration, e.g. Agarwal et al. \cite{DBLP:conf/colt/AgarwalDX10}, although there the feedback is cardinal and not relative in each step.
There is much work on learning from example pairs \cite{HerbrichGO00,DBLP:journals/jmlr/FreundISS03,DBLP:journals/jmlr/AilonBE12} as well as noisy sorting \cite{Karp:2007:NBS:1283383.1283478,Feige:1994:CNI:196751.196817}, which are not the setting studied here.
Finally, our results connect multi-armed bandits and online optimization to the classic econometric theory of discrete choice, with its use of preferential or choice information
to recover values of goods (see \cite{KTrain09} and references therein).

Another important topic related to our work is that of \emph{partial monitoring games}. The idea was introduced by \cite{piccolboni2001discrete}.  The objective in partial monitoring is to choose at each round an action from some finite set of actions, and receive a reward based on some unknown function chosen by an oblivious process. The observed information   is defined as some (known) function of the chosen action and the current choice of the oblivious process. 
One extreme  setting in which the observed information equals the reward captures MAB.  In the other extreme, the observed
information equals the entire vector of rewards (for all actions), giving rise to the so-called \emph{full information} game.  
Our setting is a strict case of partial monitoring as it falls in neither extremes.
Most papers dealing with partial monitoring either discuss non-stochastic settings or present problem-independent results. In both cases the regret is lower bounded by $\sqrt{T}$, which is inapplicable to our setting (see \cite{antos2012toward} for a characterization of partial monitoring problems). Bart{\'o}k et al. \cite{bartok2012adaptive} do present problem dependent bounds.  
Using their work, a logarithmic (in $T$) bound can be deduced for the dueling bandit problem, at least in the finite case. 
However, the dependence on the number of arms is quadratic, whereas we present a linear one in what follows.
Our algorithms are also much simpler and directly  take  advantage of the structure of the problem at hand.

\section{Definitions}\label{sec:def}

The set of actions (or \emph{arms}) is denoted by $X$. 
In a standard stochastic MAB (multi-armed bandit) game, each bandit $x\in X$ has an unknown associated expected utlity $\mu(x) \in [0,1]$.
At each step $t$ the algorithm chooses some $x_t\in X$ and receives from ``nature'' a random utility
 $u_t \in [0,1]$,  drawn from a distribution of expectation $\mu(x_t)$.  This utility is viewed by the algorithm.\footnote{It is typically assumed that this distribution depends on $x_t$ only, but this assumption can be relaxed.}
The regret at time $T$ of an algorithm is defined as $R(T) = \sum_{t=1}^T (\mu(x^*) - u_t)$.
where $x^*$ is such that $\mu(x^*) = \max_{x\in X}\mu(x)$ (we assume the maximum is achievable).
Throughout, for $x\in X$ we will let $\Delta_x$ denote $\mu(x^*) - \mu(x)$ whenever we deal with MAB.
(We will shortly make reference to some key results on MAB in Section~\ref{sec:MAB algos}.)

In this work we will use MAB algorithms as black boxes.  To that end, we define a Singleton Bandit Machine (SBM)
as a closed computational unit with an internal timer and memory.  A SBM $S$ supports three operations: $\rst$, $\advnce$ and $\fdbck$.
 The reset operation simply clears its state.\footnote{We assume the bandit space $X$ is universally known
to all SBM's.}  The $\advnce$ operation returns the next bandit to play, and $\fdbck$ is used for simulating a feedback (the utility).
It is assumed that $\advnce$ and $\fdbck$ operations are invoked in an alternating fashion. 
For example, if we want to use a SBM to help us play a traditional MAB game we
first invoke $\rst(S)$, then invoke and  set $x_1\leftarrow\advnce( S)$,  we will play $x_1$ against nature and observe $u_1$ and then invoke $\fdbck(S,u_1)$.
We then invoke and set $x_2\leftarrow \advnce(S)$, then we'll play $x_2$ against nature and observe $u_2$, then invoke $\fdbck(S, u_2)$
and so on.  
For all SBM's $S$ that will be used in the algorithms in this work, we will  only invoke the operation $\fdbck(S,\cdot)$  
with values in  $[0,1]$.

In the  utility based dueling bandit game (UBDB), the algorithm chooses $(x_t, y_t)\in X\times X$ at each step, and a corresponding pair of  random utilities $(u_t, v_t) \in[0,1]$ are given rise to, but not observed by the algorithm.  
We assume $u_t$ is drawn
from a distribution of expectation $\mu(x_t)$ and $v_t$ independently from a distribution of expectation $\mu(y_t)$.
The algorithm observes a
\emph{choice} variable $b_t\in\{0,1\}$ distributed according to the law (\ref{defchoice}).
This random variable should be thought of as the outcome of a duel, or match between $x_t$ and $y_t$.  The outcome $b_t=1$  (resp. $b_t=0$) should be interpreted as ``$y_t$ is chosen' (resp. ``$x_t$ is chosen'').\footnote{ We have just defined a two-level model in which the distribution of the random variable $b_t$ is determined by the outcome two other random variables $u_t, v_t$.  For simplicity, the reader is encouraged
to assume that $(u_t, v_t)$ is deterministically $(\mu(x_t), \mu(y_t))$.  Most technical difficulties in what follows are already captured by this simpler case.}  The link function $\phi$, which is assumed to be known, quantitatively determines how to translate the utilities $u_t, v_t$ to winning probabilities.  
 The linear link function $\philin$ is defined by $$\Pr[b_t = 1 | (u_t,v_t)] = \philin(u_t, v_t) \eqdef  \frac{ 1+v_t - u_t} 2 \in [0,1]\ .$$

The \emph{unobserved} reward is   $\Umeant = (u_t+v_t)/2$, and the corresponding regret after $T$ steps is $\Rmean(T)  \eqdef\sum_{t=1}^T(\mu(x^*) - \Umeant)$,
where $x^* = \argmax_{x\in X} \mu(x)$. 
This implies that expected \emph{zero regret} is achievable by setting $(x_t, y_t) = (x^*, x^*)$.   In practice, these two identical alternatives would be displayed as one, as would naturally happen in interleaved retrieval evaluation \cite{Chapelle/etal/12a}.
It should be also clear that playing $(x^*, x^*)$ is pure exploitation, because the feedback is then an unbiased coin with zero exploratory information.

We also consider another form of (unobserved) utility, which is given as $\Uchoicet \eqdef u_t (1-b_t) + v_tb_t$.  We call this choice-based utility, since the utility that is obtained depends on the user's choice.
Accordingly, we define $\Rchoicet \eqdef\mu(x^*) - \Uchoicet$. In words, the player
 receives reward 
associated with either the left bandit or the right bandit, whichever was \emph{actually chosen}. The utility $\Uchoice$ captures
the user's experience after choosing a result.  In an e-commerce  system, $\Uchoice$ may capture \emph{conversion},
namely, the monetary value of the choice. 
Although both utility modelings $\Umean$ and $\Uchoice$ are well motivated by applications,
we avoid dealing with
choice based utilities and regrets for the following reason:
regret bounds with respect to $\Umean$ imply  similar regret bounds
with respect to $\Uchoice$.  
\begin{obs}\label{obs:equiv}
Assuming a  link function where $u>v$ implies $\phi(u,v)>1/2$, for any $x_t, y_t$, $\E[\Rchoicet|(x_t, y_t)]\leq \E[\Rmeant|(x_t, y_t)]$.
\end{obs}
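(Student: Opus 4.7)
The plan is to first shift attention from regrets to utilities: since $\Rchoicet = \mu(x^*) - \Uchoicet$ and $\Rmeant = \mu(x^*) - \Umeant$ share the common baseline $\mu(x^*)$, the desired inequality $\E[\Rchoicet | (x_t, y_t)] \leq \E[\Rmeant | (x_t, y_t)]$ is equivalent to showing $\E[\Uchoicet | (x_t, y_t)] \geq \E[\Umeant | (x_t, y_t)]$. So the observation reduces to a statement purely about how the choice mechanism biases the expected observed utility upward relative to the uniform average.

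Second, I would condition on the latent pair $(u_t, v_t)$. By the definition of $b_t$ one has $\E[b_t \,|\, u_t, v_t] = \phi(v_t, u_t) = 1 - \phi(u_t, v_t)$, so
\[
\E[\Uchoicet \,|\, u_t, v_t] \;=\; u_t\,\phi(u_t, v_t) + v_t\,\bigl(1 - \phi(u_t, v_t)\bigr),
\]
while $\E[\Umeant \,|\, u_t, v_t] = (u_t+v_t)/2$. Subtracting and rearranging yields the compact product form
\[
\E[\Uchoicet \,|\, u_t, v_t] - \E[\Umeant \,|\, u_t, v_t] \;=\; (u_t - v_t)\left(\phi(u_t, v_t) - \tfrac{1}{2}\right).
\]

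The final step is a brief sign analysis: both factors always carry the same sign. Indeed, the hypothesis $u > v \Rightarrow \phi(u,v) > 1/2$ together with the identity $\phi(u,v) + \phi(v,u) = 1$ implies that $\phi(u_t,v_t) - 1/2$ has the same sign as $u_t - v_t$, and vanishes when they are equal. Hence the product is pointwise nonnegative, and taking the outer expectation over $(u_t, v_t)$ conditional on $(x_t, y_t)$ preserves the inequality, yielding exactly the reduced claim.

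There is no real obstacle here; the only conceptual point is to condition on $(u_t, v_t)$ first, so that the law of $b_t$ is pinned down by $\phi$ and the sign of $u_t - v_t$ is fixed. After that the argument is pure algebra plus a two-case sign check. What the observation is really capturing is that a preference-consistent choice rule tilts the revealed utility toward $\max(u_t, v_t)$, and can therefore only improve expected reward over the blind average $\Umeant$.
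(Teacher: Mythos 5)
Your proposal is correct and follows essentially the same route as the paper's own proof: both reduce the regret inequality to $\E[\Uchoicet\mid\cdot]\geq\E[\Umeant\mid\cdot]$, compute $\E[\Uchoicet\mid u_t,v_t]=u_t\phi(u_t,v_t)+v_t\phi(v_t,u_t)$, and invoke the hypothesis $u>v\Rightarrow\phi(u,v)>1/2$. Your explicit factorization $(u_t-v_t)\bigl(\phi(u_t,v_t)-\tfrac12\bigr)$ and two-case sign check merely spell out the step the paper leaves implicit, and your care in conditioning on $(u_t,v_t)$ before taking the outer expectation is a minor tightening of the same argument.
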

(Due to lack of space, the  proof can be found in Appendix~\ref{sec:proof:obs:equiv}.)
The observation's assumption on the link function in words is: when presented with two items, the item with the larger utility is more likely to be chosen. This clearly happens for any reasonable link function.
We henceforth assume utility $\Umean$ and regret $\Rmean$ and will no longer make references to choice-based versions thereof.

\subsection{Classic Stochastic MAB: A Short Review } \label{sec:MAB algos}

We review some relevant classic MAB literature.
We begin with the well known UCB policy  (Algorithm~\ref{alg:ucb})  for MAB in the finite case.
\begin{algorithm}[t!] 
\caption{UCB algorithm for MAB with $|X|=K$ arms.  Parameter $\alpha$ affects tail of regret per action in $X$.}
\label{alg:ucb}
\begin{algorithmic}
	\STATE $\forall x \in X$, set $\hat{\mu}_x = \infty$
	\STATE $\forall x \in X$, set $t_x = 0$
	\STATE set $t=1$
	\WHILE {true}
		\STATE let $x$ be the index maximizing $\hat{\mu}_x + \sqrt{\frac{(\alpha+2) \ln(t)}{2t_x}}$
		\STATE play $x$ and update $\hat{\mu}_x$ as the average of rewards so far on action $x$; increment $t_x$ by 1.
		\STATE $t\leftarrow t+1$
	\ENDWHILE
\end{algorithmic}
\end{algorithm}
The commonly known analysis of  UCB  provides   expected regret bounds. For the finite $X$ case, we need  a  less known, robust guarantee bounding the probability of playing a sub-optimal arm too often. Lemma~\ref{lem:ucb robust} is implicitly proved in \cite{Auer:2002:FAM:599614.599677}. For completeness, we provide an explicit proof in Appendix~\ref{app:UCB robust}.

\begin{lem} \label{lem:ucb robust}
Assume $X$ is finite.  Fix a parameter $\alpha>0$.  Let
$ H \eqdef \sum_{x\in X\setminus\{x^*\}} 1/\Delta_x$.
When running the UCB policy (Algorithm~\ref{alg:ucb}) with parameter $\alpha$ for $T$ rounds the expected regret is bounded by $$2(\alpha+2)H\ln(T) + K\frac{\alpha+2}{\alpha} = O(\alpha H \ln T)\ .$$
Furthermore, lex $x\in X$ denote some suboptimal arm  and let $s \geq 4\alpha\ln(T)/\Delta_x^2$. Denote by $\rho_x(T)$ the random variable counting the number of times arm $x$ was chosen up to time $T$. Then
$ \Pr[\rho_x(T) \geq s] \leq  \frac{2}{\alpha} \cdot (s/2)^{-\alpha}$.
\end{lem}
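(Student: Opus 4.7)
The plan is to adapt the classical peeling analysis of Auer, Cesa-Bianchi and Fischer. I would write the UCB index as $B_x(t)\eqdef\hat{\mu}_x(t-1)+\sqrt{(\alpha+2)\ln(t)/(2\rho_x(t-1))}$. For any suboptimal arm $x$, the event $\{x_t=x\}$ forces $B_x(t)\geq B_{x^*}(t)$, and a short case split shows that at time $t$ at least one of the following must occur: (a) the optimal arm's sample mean undershoots, $\hat{\mu}_{x^*}(t-1)\leq \mu(x^*)-\sqrt{(\alpha+2)\ln(t)/(2\rho_{x^*}(t-1))}$; (b) the suboptimal arm's sample mean overshoots, $\hat{\mu}_x(t-1)\geq \mu(x)+\sqrt{(\alpha+2)\ln(t)/(2\rho_x(t-1))}$; or (c) the count is still so small that the radius alone exceeds the gap, $\rho_x(t-1)<2(\alpha+2)\ln(t)/\Delta_x^2$. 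By Hoeffding's inequality --- with a union bound over the possible values of $\rho_{x^*}(t-1)$ for (a), and similarly for (b) in the expected-regret proof --- each of (a) and (b) contributes probability at most $t^{-(\alpha+1)}$ at time $t$.

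For the expected-regret bound I would choose $\ell\eqdef\lceil 2(\alpha+2)\ln(T)/\Delta_x^2\rceil$ so that (c) cannot hold once $\rho_x(t-1)\geq\ell$ and $t\leq T$. Then $\rho_x(T)\leq \ell+\sum_t\one\{(a)\text{ or }(b)\text{ at }t\}$, and taking expectations yields
$$\E[\rho_x(T)]\leq\ell+\sum_{t=1}^{T}2t^{-(\alpha+1)}\leq\frac{2(\alpha+2)\ln(T)}{\Delta_x^2}+\frac{\alpha+2}{\alpha}\ ,$$
after bounding the tail series by an integral. Multiplying by $\Delta_x$ and summing over the $K-1$ suboptimal arms then produces the stated bound $2(\alpha+2)H\ln(T)+K(\alpha+2)/\alpha$.

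For the tail statement I would note that $\{\rho_x(T)\geq s\}$ is exactly the existence of a (unique) time $t\leq T$ with $x_t=x$ and $\rho_x(t-1)=s-1$, and at such $t$ we automatically have $t\geq s$. Since $s\geq 4\alpha\ln(T)/\Delta_x^2\geq 2(\alpha+2)\ln(T)/\Delta_x^2$ (in the useful regime $\alpha\geq 2$, or after absorbing a harmless constant into the threshold), (c) fails at $t$, so (a) or (b) must hold there. Chaining Hoeffding over $t=s,\ldots,T$ then gives
$$\Pr[\rho_x(T)\geq s]\leq\sum_{t\geq s}2t^{-(\alpha+1)}\leq 2\int_{s-1}^{\infty}\!t^{-(\alpha+1)}\,dt=\frac{2}{\alpha}(s-1)^{-\alpha}\leq\frac{2}{\alpha}(s/2)^{-\alpha}$$
for $s\geq 2$.

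The one delicate point --- and the main obstacle I expect --- is that $\rho_x(t-1)=s-1$ is a stopping-time condition, so directly applying Hoeffding to $\hat{\mu}_x(t-1)$ on that event needs justification. I would handle this in the standard way by coupling arm $x$'s rewards to a pre-drawn IID sequence $U^x_1,U^x_2,\ldots$, so that on $\{\rho_x(t-1)=s-1\}$ the empirical mean equals $\frac{1}{s-1}\sum_{k=1}^{s-1}U^x_k$; this quantity is independent of the policy-dependent stopping time, making Hoeffding directly applicable. Once this coupling is set up, the remainder of the argument is routine bookkeeping.
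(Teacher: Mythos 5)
Your proof is correct and, for the expected-regret half, is essentially the paper's own argument: the same three-way case split (your (a)/(b)/(c) are the paper's events (3)/(2)/(1) with $\beta=\alpha+2$), the same Hoeffding-plus-union-bound giving $t^{1-\beta}$ per bad event at time $t$, and the same summation over $t$ and over suboptimal arms. For the tail bound, however, you take a genuinely different and cleaner route. The paper introduces $\rho_x^{s'}(T)$, the number of pulls of $x$ from time $s'=s-u_x(T)-1$ onward, bounds $\E[\rho_x^{s'}(T)-u_x(T)]$ by the same tail series, and then applies Markov's inequality to the event $\{\rho_x^{s'}(T)-u_x(T)\geq 1\}$. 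You instead localize $\{\rho_x(T)\geq s\}$ to the unique time $t\geq s$ of the $s$-th pull, observe that case (c) cannot hold there, and union-bound the concentration failures over $t\geq s$, obtaining $\frac{2}{\alpha}(s-1)^{-\alpha}$ directly --- in fact slightly sharper than the stated $\frac{2}{\alpha}(s/2)^{-\alpha}$. Your version dispenses with the Markov step entirely and, unlike the paper, makes the optional-stopping issue explicit via the pre-drawn reward-table coupling. The only caveat is the threshold constant: your argument needs $s-1\geq 2(\alpha+2)\ln(T)/\Delta_x^2$, which the stated hypothesis $s\geq 4\alpha\ln(T)/\Delta_x^2$ only delivers once $\alpha$ is bounded away from $2$; you flag this yourself, and the paper's own appendix proof has the same looseness (it silently strengthens the hypothesis to $s\geq 4(\alpha+4)\ln(T)/\Delta_x^2$), so this is not a gap relative to the paper's standard of rigor.
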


For the infinite case, we will review  a well known setting and result which will later be used to highlight the usefulness of Algorithm~\ref{alg:infinite}  (and the ensuing Theorem~\ref{thm:infinite}).
 In this setting, the set $X$ of arms is an arbitrary (infinite) convex set in $\R^d$. Here, the player chooses at each time point a vector $x \in X$ and observes a stochastic reward with an expected value of $\dotprod{\mu}{x}$, for some unknown vector $\mu \in \R^d$.\footnote{Affine linear functions can also be dealt with by adding a coordinate fixed as 1.} This setting was dealt with by \citet{DBLP:conf/colt/DaniHK08}. They provide an algorithm for this setting that could be thought of as linear optimization under  noisy feedback. Their algorithm provides (roughly) $\sqrt{T}$ regret for general convex bodies and $\mathrm{polylog}(T)$ regret for polytopes. 
Formally, for general convex bodies, they prove the following.

\begin{lem}[\citealt{DBLP:conf/colt/DaniHK08}]
Algorithm \confballone  (resp. \confballtwo) of \citet{DBLP:conf/colt/DaniHK08}, provides an expected regret of $O\left( \sqrt{dT\log^3T} \right)$  (resp. $O\left( \sqrt{d^2 T\log^3 T} \right)$ ) for any convex set of arms.  
\end{lem}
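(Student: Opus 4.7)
The plan is to follow the standard \emph{optimism in the face of uncertainty} template that \citet{DBLP:conf/colt/DaniHK08} themselves use, since the statement is literally lifted from their paper. Concretely, at each round $t$, I maintain a regularized least-squares estimate $\hat{\mu}_t$ of the unknown coefficient $\mu$ based on the observed (noisy) rewards $u_1,\ldots,u_{t-1}$ and the chosen arms $x_1,\ldots,x_{t-1}$. Around $\hat{\mu}_t$ I build a \emph{confidence region} $\mathcal{C}_t$; for \confballtwo\ this is an ellipsoid aligned with the design matrix $V_t = \lambda I + \sum_{s<t}x_sx_s^\top$, and for \confballone\ it is the $\ell_1$-dilation of that ellipsoid. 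The algorithm then plays $x_t = \arg\max_{x\in X}\max_{\tilde\mu\in\mathcal{C}_t}\dotprod{\tilde\mu}{x}$.

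The first step is a \textbf{confidence step}: using a self-normalized martingale/Azuma-type concentration inequality for the least-squares residuals, show that with probability at least $1-\delta$, the true $\mu$ lies in $\mathcal{C}_t$ for every $t\leq T$ simultaneously, with a radius of order $\sqrt{d\log(T/\delta)}$ in the ellipsoidal ($\ell_2$) case and order $\sqrt{d\log(T/\delta)}$ per-coordinate in the $\ell_1$ case (which inflates to $\sqrt{d^2\log(T/\delta)}$ after the $\ell_1$ vs. $\ell_2$ conversion — this is where the extra $d$ factor in \confballone\ comes from). Next, on the good event, by optimism the instantaneous regret $\dotprod{\mu}{x^*} - \dotprod{\mu}{x_t}$ is bounded by the \emph{width} $w_t\eqdef\max_{\tilde\mu\in\mathcal{C}_t}\dotprod{\tilde\mu - \hat\mu_t}{x_t}$ of the confidence region in the direction $x_t$.

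The third and key step is to bound $\sum_{t=1}^T w_t$. For the ellipsoidal case this is the \emph{elliptical potential lemma}: $\sum_{t\leq T}\min(1,\|x_t\|_{V_t^{-1}}^2)\leq 2d\log(T/d)$, which by Cauchy–Schwarz gives $\sum_t w_t = O(\sqrt{dT}\cdot\sqrt{d\log T}\cdot\sqrt{\log T}) = O(\sqrt{d^2T\log^3 T})$ for \confballtwo, and an analogous $\ell_1$-trace argument yields $O(\sqrt{dT\log^3 T})$ for the tighter \confballone\ confidence set (the asymmetry in the two rates arises purely from how the confidence region's width interacts with $\|x_t\|_{V_t^{-1}}$). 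Finally, handle the low-probability failure event by noting the regret is always at most $T$ and choosing $\delta = 1/T$, contributing only an additive $O(1)$ in expectation.

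The main obstacle is the \textbf{self-normalized concentration inequality} underlying the confidence region: proving that the least-squares error $\|\hat{\mu}_t - \mu\|_{V_t}$ is controlled by $\sqrt{d\log t}$ uniformly in $t$ requires a martingale argument with a time-uniform, dimension-dependent tail bound (originally via a covering argument; later streamlined by Abbasi-Yadkori et al.\ via the method of mixtures). Everything else is bookkeeping: defining the regions, checking that optimism works, and applying the potential lemma. Since the statement is quoted verbatim from \citet{DBLP:conf/colt/DaniHK08}, the cleanest exposition in this paper is simply to cite their Theorems, but the sketch above is the proof one would reconstruct.
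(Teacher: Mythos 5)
This lemma is not proved in the paper at all: it is quoted verbatim from \citet{DBLP:conf/colt/DaniHK08} and used purely as a black box to instantiate the SBM inside $\Doubler$, so there is no in-paper proof to compare your attempt against. That said, your sketch does reconstruct the right argument --- it is exactly the optimism-in-the-face-of-uncertainty template that Dani, Hayes and Kakade use: a least-squares estimate, a time-uniform confidence region from a self-normalized martingale bound, the observation that on the good event the instantaneous regret is dominated by the width of the region in the played direction, and an elliptical-potential-plus-Cauchy--Schwarz bound on the sum of widths. (Cosmetically, your regularized design matrix $V_t=\lambda I+\sum_{s<t}x_sx_s^\top$ is the later Abbasi-Yadkori et al.\ formulation; DHK08 work with an unregularized matrix seeded by a barycentric spanner, but nothing hinges on this.)

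There is, however, one genuine internal inconsistency you should fix. In your confidence step you say the $\ell_1$-versus-$\ell_2$ conversion ``is where the extra $d$ factor in \confballone\ comes from,'' i.e.\ you assign the inflated radius, and hence the worse rate, to \confballone. But the lemma as stated --- and your own third step --- gives \confballone\ the \emph{better} bound $O\left(\sqrt{dT\log^3T}\right)$ and \confballtwo\ the worse one $O\left(\sqrt{d^2T\log^3T}\right)$: the extra $\sqrt{d}$ sits with the ellipsoidal confidence set, whose radius $\sqrt{\beta_T}=O(\sqrt{d}\log T)$ multiplies the $O(\sqrt{dT\log T})$ potential term. As written, steps one and three of your sketch contradict each other, and whichever way you resolve it, the $\ell_1$ width computation is the one place where the two rates genuinely diverge, so it cannot be waved away as ``an analogous $\ell_1$-trace argument.'' Since the paper itself treats the result as citable, the cleanest course is the one you already suggest at the end: cite the theorems of \citet{DBLP:conf/colt/DaniHK08} directly rather than re-derive them.
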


In case  $X$ is a polytope with vertex set $V$
and there is a unique vertex $v^*\in V$ achieving $\max_{x\in X} \dotprod{\mu}{x}$, and any other vertex $v\in V$
satisfies the gap condition $\dotprod{\mu}{v} \leq \dotprod{\mu}{v^*} - \Delta$ for some $\Delta>0$,
we say we are in the $\Delta$-gap case.

\begin{lem}[\citealt{DBLP:conf/colt/DaniHK08}] \label{lem:confball gap}
Assume the $\Delta$-gap case.
Algorithm \confballone  (resp. \confballtwo) of \citet{DBLP:conf/colt/DaniHK08}, provides an expected regret of $O\left(\Delta^{-1}d^2\log^3 T \right)$  (resp. $O\left( \Delta^{-1}d^3\log^3 T \right)$ ).  
\end{lem}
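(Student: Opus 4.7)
Since the lemma is quoted from \citet{DBLP:conf/colt/DaniHK08}, my plan is to re-derive it by specializing their optimism-under-uncertainty analysis of stochastic linear bandits to the polytope arm set under the $\Delta$-gap assumption. At each round $t$, maintain the regularized least-squares estimator $\hat\mu_t$ from the pairs $(x_s, r_s)$ observed so far, together with the design matrix $A_t = I + \sum_{s<t} x_s x_s^\top$, and define a confidence region $\mathcal{E}_t$ around $\hat\mu_t$ whose scale $\sqrt{\beta_t}$ is driven by a self-normalized martingale concentration inequality. The two variants \confballone and \confballtwo differ in the shape and tightness of $\mathcal{E}_t$ (the former is essentially an intersection of one-dimensional confidence intervals and is statistically tighter but computationally heavier than the $A_t$-norm ellipsoid used by the latter), and this difference is exactly what accounts for the $d^2$ versus $d^3$ gap in the final bound. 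The main verification at this step is that $\mu\in\mathcal{E}_t$ for every $t\leq T$ with probability at least $1-1/T$.

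Conditioning on this good event, the optimistic selection rule $v_t = \arg\max_{v\in V}\max_{\tilde\mu\in\mathcal{E}_t}\langle\tilde\mu, v\rangle$ yields the instantaneous-regret inequality $r_t := \langle\mu, v^* - v_t\rangle \leq 2\sqrt{\beta_t}\,\|v_t\|_{A_t^{-1}}$, and the elliptical potential lemma supplies $\sum_{t=1}^T \|v_t\|_{A_t^{-1}}^2 = O(d\log T)$. This is the standard chain that, combined with Cauchy--Schwarz, recovers the general $\sqrt{T}$-type rate of the preceding lemma.

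The gap assumption then converts $\sqrt{T}$ into $\log T$. Whenever $v_t\neq v^*$, uniqueness of $v^*$ and the gap condition force $r_t\geq\Delta$, so $r_t \leq r_t^2/\Delta$. Summing and substituting the instantaneous-regret bound gives
\begin{equation*}
R(T) \;\leq\; \frac{1}{\Delta}\sum_{t=1}^T r_t^2 \;\leq\; \frac{4\beta_T}{\Delta}\sum_{t=1}^T \|v_t\|_{A_t^{-1}}^2 \;=\; O\!\left(\frac{\beta_T\cdot d\log T}{\Delta}\right),
\end{equation*}
and plugging in the appropriate $\beta_T$ for each variant produces the claimed $O(\Delta^{-1}d^2\log^3 T)$ and $O(\Delta^{-1}d^3\log^3 T)$ rates.

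The hard part of this plan is the first step: constructing and analyzing the confidence region with the correct width and the correct power of $d$ in each variant, where one must be careful about the trade-off between the region's geometric complexity and the statistical width of its projections. The subsequent pieces---the optimism-to-regret inequality, the elliptical potential lemma, and the $r_t \leq r_t^2/\Delta$ conversion---are essentially bookkeeping once the concentration is in hand.
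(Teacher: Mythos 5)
The paper does not actually prove this lemma: it is imported verbatim from \citet{DBLP:conf/colt/DaniHK08}, so there is no in-paper argument to compare yours against. Judged against the source, your outline is the right reconstruction of their proof --- a confidence region from a self-normalized concentration bound, optimistic selection giving $r_t \leq 2\sqrt{\beta_t}\,\|v_t\|_{A_t^{-1}}$, the elliptical potential bound $\sum_{t}\|v_t\|_{A_t^{-1}}^2 = O(d\log T)$, and the conversion $r_t \leq r_t^2/\Delta$, which is valid because the optimistic maximizer over a polytope is attained at a vertex, so every played point is either $v^*$ or suffers instantaneous regret at least $\Delta$. The failure event of probability $1/T$ contributes only $O(1)$ to the expected regret, so the high-probability bound transfers to expectation as required by the lemma.

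Two cautions. First, you have the two variants backwards: in \citet{DBLP:conf/colt/DaniHK08}, \confballtwo uses the ellipsoidal ($A_t$-norm) region and is the statistically tighter of the two, while \confballone inflates it to an $\ell_1$-shaped region of radius $\sqrt{d\beta_t}$ precisely so that the inner maximization reduces to checking $2d$ extreme points --- it is computationally \emph{cheaper} and statistically \emph{looser} by a factor of $\sqrt{d}$, not ``tighter but computationally heavier.'' Second, carrying your final computation through with the correct radii gives $\beta_T = O(d\log^2 T)$ for the ellipsoid and an effective $O(d^2\log^2 T)$ for the $\ell_1$ ball, hence $O(\Delta^{-1}d^2\log^3 T)$ for \confballtwo and $O(\Delta^{-1}d^3\log^3 T)$ for \confballone --- the opposite pairing from the one in the lemma as stated here (and, relatedly, the $O(\sqrt{dT\log^3 T})$ bound attributed to \confballone in the preceding lemma would contradict the $\Omega(d\sqrt{T})$ lower bound for linear bandits). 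So your plan, executed correctly, recovers the cited theorems of Dani et al.\ but with the two algorithm names transposed relative to the statement being verified; beyond that, the only missing substance is the concentration argument establishing $\beta_t$, which you have explicitly deferred and which is indeed the technically heavy step.
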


\section{UBDB Strategy for Large or Structured $X$}\label{sec:infinite}

In this section we consider UBDB in the case of a large or possibly infinite set of arms $X$, and the linear link function. The setting where $X$ is large typically occurs when some underlying structure for $X$ exists through which it is possible to gain information regarding one arm via queries to another.
Our approach, called  $\Doubler$, is best explained by thinking of the UBDB strategy as a competition between two players, one controlling the choice of the left arm and the other, the choice of the right one. The objective of each player is to win as many rounds possible, hence intuitively, both players should play the arms with the largest approximated value. Since we are working with a stochastic environment it is not clear how to analyze a game in which both players are adaptive, and whether such a game would indeed lead to a low regret dueling match (see also Section~\ref{sec:double sbm} for a related discussion). For that reason, we make sure that at all times one player has a fixed stochastic strategy, which is updated infrequently.

We divide the time axis into exponentially growing epochs. In each epoch, the left player plays  according to some fixed (stochastic) strategy which we define shortly, while the right  one plays adaptively according to a strategy provided by a SBM. At the beginning of a new epoch, the distribution governing  the left arm changes in a way that mimics the actions of the right arm in the \emph{previous} epoch. 
 The formal definition of $\Doubler$ is given in  Algorithm~\ref{alg:infinite}.

\begin{algorithm}[t!]
\caption{ ($\Doubler$): Reduction for finite and infinite X with internal structure.}
\label{alg:infinite}
\begin{algorithmic}[1]
	\STATE $S \leftarrow $ new SBM over $X$ \label{line:SBM}
	\STATE ${\cal L} \leftarrow$ an arbitrary singleton in $X$
	\STATE $i\leftarrow 1$, $t\leftarrow 1$
	\WHILE {true}
		\STATE $\rst(S)$
		\FOR {$j=1...2^i$} \label{line:j}
			\STATE choose $x_t$ uniformly from ${\cal L}$
			\STATE $y_t \leftarrow \advnce(S)$
			\STATE play $(x_t, y_t)$, observe choice $b_t$ 
			\STATE $\fdbck(S, b_t)$
			\STATE $t\leftarrow t+1$
		\ENDFOR
		\STATE ${\cal L} \leftarrow$ the multi-set of arms played as $y_t$ in the last for-loop
		\STATE $i\leftarrow i+1$
	\ENDWHILE
\end{algorithmic}
\end{algorithm}

The following theorem bounds the expected regret of Algorithm~\ref{alg:infinite} as a function of the total number $T$ of steps and the expected regret of the SBM that is used.

\begin{thm}\label{thm:infinite}
Consider a  UBDB game over a set $X$.
Assume the SBM $S$ in Line~\ref{line:SBM} of $\Doubler$ (Algorithm~\ref{alg:infinite}) has an expected regret of $c \log^\alpha T$ after $T$ steps, for all $T$.  Then the expected
regret of $\Doubler$ is at most $2c\frac{\alpha}{\alpha+1}\log^{\alpha+1} T$.  If the expected regret of the SBM is bounded by some function $f(T)=\Omega(T^\alpha)$ (with $\alpha>0$),  then the expected
regret of $\Doubler$ is at most $O(f(T))$.
\end{thm}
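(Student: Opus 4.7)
My plan is to analyze the regret epoch by epoch, exploiting a clean conditional independence structure within each epoch. In epoch~$i$ (of length $2^i$), the left arm $x_t$ is drawn i.i.d.\ uniformly from the multiset $\mathcal{L}_i$ formed by the right arms played in epoch~$i-1$, while the right arm $y_t$ is chosen by a freshly reset SBM. I would decompose the per-step expected regret as
\begin{equation*}
\mu(x^*) - \E[\Umeant] \;=\; \tfrac12\bigl(\mu(x^*)-\E[\mu(x_t)]\bigr) + \tfrac12\bigl(\mu(x^*)-\E[\mu(y_t)]\bigr),
\end{equation*}
and bound the ``left'' and ``right'' contributions separately.

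For the right arm, the crucial observation is that conditional on $y_t$, the feedback $b_t\in\{0,1\}$ is independent of the SBM's within-epoch history, because $x_t\sim\mathrm{Unif}(\mathcal{L}_i)$ is drawn independently each step and $\mathcal{L}_i$ is frozen before the epoch begins. Marginalizing over $x_t$ and using linearity of $\philin$, the effective mean reward of arm $y$ in epoch~$i$ is $\mu'(y) = \tfrac12 + \tfrac12(\mu(y)-\bar\mu_i)$ where $\bar\mu_i$ is the $\mu$-average over $\mathcal{L}_i$. Thus the SBM within the epoch faces a genuine stochastic MAB on $X$ with rewards in $[0,1]$, optimum still at $x^*$, and gaps exactly halved. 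Applying its assumed regret bound at horizon $2^i$ and doubling to translate back to the $\mu$-scale gives total expected right-arm regret in epoch~$i$ at most $2c\log^\alpha(2^i)$.

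For the left arm, the key identity is that $\mathcal{L}_i$ is \emph{exactly} the multiset of right arms from epoch~$i-1$, so the expected $\mu$-value of $x_t$ equals $\E[\bar\mu_i]$, and the total left-arm regret in epoch~$i\ge 2$ is exactly twice the total expected right-arm regret in epoch~$i-1$, hence at most $4c\log^\alpha(2^{i-1})$; epoch~$1$ contributes the constant~$2$.

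Summing over the $I=\lceil\log_2 T\rceil$ epochs, the polylogarithmic case reduces to bounding $\sum_{i=1}^I\log^\alpha(2^i)=(\log 2)^\alpha\sum_{i=1}^I i^\alpha\le\tfrac{\alpha}{\alpha+1}\log^{\alpha+1}T+O(\log^\alpha T)$ via a Riemann-sum comparison, yielding the claimed $2c\tfrac{\alpha}{\alpha+1}\log^{\alpha+1}T$ bound after accounting for the factor of $\tfrac12$ in the decomposition. For the polynomial case $f(T)=\Omega(T^\alpha)$, the geometric growth of epoch lengths makes $\sum_{i=1}^I f(2^i)$ dominated by its final term $f(2^I)=O(f(T))$, absorbing the left-arm shift by one epoch into the constant. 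The main delicate step is the right-arm analysis: I must verify that the SBM's regret guarantee really applies inside an epoch, which hinges on $\mathcal{L}_i$ being frozen before the epoch and on the fresh independent uniform sampling of $x_t$ at each round, so that the rewards look truly i.i.d.\ per arm to the SBM even though they involve the random left-arm draws.
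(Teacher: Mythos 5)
Your proof is correct and follows essentially the same route as the paper's: the same left/right decomposition of $\Rmean$, the same observation that the frozen multiset $\mathcal{L}_i$ makes the SBM face a genuine stochastic MAB with gaps halved by $\philin$, the same identification of the epoch-$i$ left-arm regret with twice the epoch-$(i-1)$ right-arm regret, and the same geometric/Riemann-sum summation over epochs. The only cosmetic difference is your two-step bookkeeping of the factor of $2$ (doubling to the $\mu$-scale, then halving in the decomposition), which the paper avoids by working directly with the SBM's internal regret.
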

The proof is deferred to Appendix~\ref{sec:proof:thm:infinite}.
By setting the SBM $S$ used in Line~\ref{line:SBM} as the algorithms \confballone or \confballtwo of \citet{DBLP:conf/colt/DaniHK08}, we obtain the following:
\begin{cor}\label{cor:finite}
Consider a  UBDB game over a set $X$. Assume that the SBM $S$ in Line~\ref{line:SBM} of $\Doubler$ is algorithm \confballtwo (resp.\ \confballone). If $X$ is a compact convex set, then the expected regret of $\Doubler$ is at most $O(\sqrt{dT\log^3(T)})$ (resp.\ $O(\sqrt{d^2T\log^3(T)})$). In the $\Delta$-gap setting (see discussion leading to Lemma~\ref{lem:confball gap}), the expected regret is bounded by $O\left( \Delta^{-1}d^2 \log^4(T) \right)$ (resp.\ $O\left( \Delta^{-1}d^3 \log^4(T) \right)$).
\end{cor}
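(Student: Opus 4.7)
The plan is essentially to plug the two bounds from Lemma (for general convex bodies) and Lemma~\ref{lem:confball gap} (for the $\Delta$-gap polytope case) into the two corresponding regimes of Theorem~\ref{thm:infinite}, and to verify that the hypotheses of the theorem actually apply. Since Theorem~\ref{thm:infinite} is fully generic in the SBM, once the underlying regret bound is in the right form the conclusion is immediate.

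First I would handle the general convex case. Here \confballone\ and \confballtwo\ have expected regret $O(\sqrt{dT\log^3T})$ and $O(\sqrt{d^2T\log^3T})$, respectively. Both are of the form $f(T) = \Omega(T^{1/2})$ with $\alpha = 1/2 > 0$ (the $\log^3T$ factor only helps the lower-bound condition), so the second clause of Theorem~\ref{thm:infinite} applies and gives expected regret $O(f(T))$ for $\Doubler$, which is exactly the stated $O(\sqrt{dT\log^3T})$ and $O(\sqrt{d^2T\log^3T})$ bounds.

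Next I would treat the $\Delta$-gap polytope case. By Lemma~\ref{lem:confball gap}, the SBM has regret of the form $c\log^\alpha T$ with $\alpha = 3$ and $c = O(\Delta^{-1}d^2)$ or $c = O(\Delta^{-1}d^3)$, respectively. Applying the first clause of Theorem~\ref{thm:infinite} multiplies this by a factor $2\tfrac{\alpha}{\alpha+1}\log T = \tfrac{3}{2}\log T$, producing $\Doubler$-regret bounds $O(\Delta^{-1}d^2\log^4T)$ and $O(\Delta^{-1}d^3\log^4T)$, matching the statement of the corollary.

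There is no real obstacle: the only things to double-check are (i) that the feedback passed to the SBM by $\Doubler$, namely the binary choice $b_t\in[0,1]$, satisfies the $[0,1]$-valued input assumption made earlier for SBMs, and (ii) that the expected value of this feedback equals $\philin(u_t,v_t)=\tfrac{1+v_t-u_t}{2}$, i.e.\ a deterministic affine-linear function of the right arm's utility, so that from the SBM's point of view the right-hand play is still a stochastic linear bandit on the same convex set $X$ (with coefficient vector $-\tfrac12\mu$ shifted by a constant depending on ${\cal L}$). This is exactly the property Theorem~\ref{thm:infinite} implicitly requires when its proof relates the SBM's internal regret to the dueling regret. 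Once this is noted, the two cases of the corollary follow by direct substitution.
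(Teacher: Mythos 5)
Your proposal is correct and matches the paper's (implicit) argument: the corollary is obtained exactly by substituting the \confballone/\confballtwo bounds of Lemma~\ref{lem:confball gap} and its general-convex counterpart into the two clauses of Theorem~\ref{thm:infinite}, with the second clause handling the $\Omega(\sqrt{T})$ regime and the first contributing the extra $\log T$ factor in the $\Delta$-gap case. Your check that the SBM's induced feedback has expectation $\frac{1+\mu(y)-\mu'}{2}$, i.e.\ remains an affine-linear (hence valid for \confballone/\confballtwo) stochastic reward over $X$ with values in $[0,1]$, is precisely the point the paper's proof of Theorem~\ref{thm:infinite} establishes in its equation for $\E[b_t\mid y_t=y]$.
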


\noindent
In the finite case, one may set the SBM $S$ to the standard UCB, and obtain:
\begin{cor}\label{cor:infinite}
Consider a  UBDB game over a finite set $X$ of cardinality $K$. Let $\Delta_i$ be the difference between the reward of the best arm and the $i$'th best arm. Assume  the SBM $S$ in Line~\ref{line:SBM} of $\Doubler$ is  UCB. Then the expected regret of $\Doubler$  is at most $O(H \log^2(T))$ where $H \eqdef \sum_{i=2}^K \Delta_i^{-1}$
\end{cor}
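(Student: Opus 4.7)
The plan is to obtain this corollary as an immediate consequence of Theorem~\ref{thm:infinite} applied with UCB as the internal SBM, using the regret guarantee of Lemma~\ref{lem:ucb robust}. Concretely I would fix a constant value for the UCB exploration parameter (call it $\alpha_0$, say $\alpha_0 = 1$), so that Lemma~\ref{lem:ucb robust} certifies UCB's expected regret after $T$ rounds to be at most $2(\alpha_0+2) H \ln T + K (\alpha_0+2)/\alpha_0$, where $H = \sum_{i=2}^K \Delta_i^{-1}$.

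To fit this into the hypothesis of Theorem~\ref{thm:infinite}, which requires a regret bound of the shape $c \log^{\beta} T$, I would set $\beta = 1$ and take $c$ to be an absolute constant times $H$. The only subtle point is handling the additive $O(K)$ term: since utilities lie in $[0,1]$, every $\Delta_i \leq 1$, hence $\Delta_i^{-1} \geq 1$ and therefore $H \geq K-1$. This lets the additive $O(K)$ term be folded into the leading $O(H \log T)$ contribution (for any $T \geq 2$, with the trivial case $T=1$ absorbed into the $O(\cdot)$), so UCB's expected regret is indeed of the form $cH \log T$ for some absolute constant $c$.

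Invoking Theorem~\ref{thm:infinite} with $\beta = 1$ and coefficient $cH$ then yields expected regret of $\Doubler$ bounded by $2 \cdot cH \cdot \tfrac{1}{1+1} \log^{2} T = O(H \log^2 T)$, which is the claimed bound. I do not foresee any genuine obstacle here: the argument is a routine specialization of the more general reduction theorem. The only step requiring a moment of thought is the inequality $H \geq K-1$, which is what allows the additive $K$-term in the UCB regret to be swallowed into a pure $O(H \log T)$ expression so that Theorem~\ref{thm:infinite} applies in its stated form.
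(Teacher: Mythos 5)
Your proposal is correct and is exactly the derivation the paper intends: the corollary is stated as an immediate specialization of Theorem~\ref{thm:infinite} with the UCB regret bound from Lemma~\ref{lem:ucb robust} playing the role of $c\log^{\alpha}T$ with $\alpha=1$ and $c=O(H)$. Your observation that $H\geq K-1$ (so the additive $K$-term is absorbed) is the right way to put the UCB bound into the form the theorem requires, and your care in distinguishing the UCB confidence parameter from the theorem's log-exponent avoids the notational collision in the paper.
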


\paragraph{Memory requirement issues:} A possible drawback of  $\Doubler$ is its need to store the history of $y_t$ from the last epoch in memory, translating to a possible memory requirement of $\Omega(T)$. This situation can be avoided in many natural cases. The first is the case where $X$ is embedded in a real linear space and the expectation $\mu(x)$ is a linear function. Here, there is no need to store the entire history of choices of the left arm but rather the average arm (recall that here the arms are thought of as vectors in $\R^d$, hence the average is well defined).  Playing the average arm (as $x_t$) instead of picking an arm uniformly from the list of chosen arm gives the same result with memory requirements equivalent to storage of one arm. In other cases (e.g., $X$ is not even geometrically embedded) this cannot be done. Nevertheless, as long as we are in a $\Delta$-gap case, as $T$ grows,  the arm played as $y_t$ is the optimal one with increasingly higher probability.  In more detail, if the regret incurred in a time epoch is $R$, then the number of times a suboptimal arm is played is at most $R/\Delta$. As $R$ is polylogarithmic in $T$, the required space is polylogarithmic in $T$ as well.  We do not elaborate further on memory requirements and leave this as future research.

\section{UBDB Strategy for Unstructured $X$}\label{sec:stochastic_finite}

In this section we present and analyze an alternative reduction strategy, called $\MultiSbm$, particularly suited for the finite $X$ case where the elements of $X$ typically have no structure. $\MultiSbm$ will not incur an additional logarithmic factor
as our previous approach did.  Unlike the algorithms in \cite{DBLP:conf/icml/YueJ11, DBLP:journals/jcss/YueBKJ12}, we will  avoid running an elimination tournament, but just resort to a standard MAB strategy by reduction.
 Denote $K = |X|$. The idea is to use $K$ different SBMs in parallel,  where each instance is indexed by an element in $X$.
In step $t$ we choose a left arm $x_t\in X$ in a way that will be explained shortly. The right arm, $y_t$ is chosen 
according to the suggestion on the SBM indexed by $x_t$, and the binary choice is fed back to that SBM.  In the next round,  $x_{t+1}$ is set to be $y_t$, namely, the right arm becomes the left one in the next step. Algorithm~\ref{alg:K TBMs} describes $\MultiSbm$  exactly.

\begin{algorithm}[t!] 
\caption{($\MultiSbm$): Reduction for unstructured finite $X$ by using $K$ SBMs in parallel.}
\label{alg:K TBMs}
\begin{algorithmic}[1]
	\STATE For all $x\in X$:  $S_x \leftarrow $ new SBM over $X$, $\rst(S_x)$  \label{line:SBMsInit}
	\STATE $y_0 \leftarrow$ arbitrary element of $X$
	\STATE $t \leftarrow 1$
	\WHILE {true}
		\STATE $x_t \leftarrow y_{t-1}$
		\STATE $y_t \leftarrow \advnce(S_{x_t})$  \label{line:SBMsAdvance}
		\STATE play $(x_t, y_t)$, observe choice $b_t$ 
		\STATE $\fdbck(S_{x_t}, b_t)$
		\STATE $t\leftarrow t+1$
	\ENDWHILE
\end{algorithmic}
\end{algorithm}

Naively, the regret of the algorithm can be shown to be at most $K$ times that of a single SBM. However, it turns out that the regret is in fact asymptotically competitive with that of a single SBM, without the extra $K$ factor.
 Specifically, we show that the total regret is in fact dominated solely by the regret of the SBM corresponding to the arm with maximal utility. 
To achieve this, we assume that the SBM's implement a strategy with a certain robustness property that implies a bound not only on the expected regret,  but also on the tail of the regret distribution.  
 More precisely, an inverse polynomial tail distribution is necessary.
Interestingly, the assumption is satisfied by the UCB algorithm \cite{Auer:2002:FAM:599614.599677} (as detailed in Lemma~\ref{lem:ucb robust}). 
Recall that $x^*\in X$ denotes an arm with largest valuation $\mu(x)$, and that  $\Delta_x \eqdef \mu(x^*) - \mu(x)$ for all $x\in X$.    Assume $\Delta_x > 0$ for all $x\neq x^*$.\footnote{If this is not the case, our statements still hold, yet the proof becomes slightly more technical. As there is no real additional complication to the problem under this setting, we ignore this case.}

\begin{defn}\label{defn:robustness}
Let $T_x$ be the number of times a (sub-optimal) arm $x\in X$ is played when running the policy $T$ rounds.  A MAB policy is said to be \emph{$\alpha$-robust} when it has the following property: for all $s \geq 4 \alpha \Delta_x^{-2} \ln(T) $, it holds that
$ \Pr [T_x > s] < \frac{2}{\alpha} (s/2)^{-\alpha}$.
\end{defn}

Recall that as discussed in Section~\ref{sec:MAB algos}, in \citeauthor{Auer:2002:FAM:599614.599677}'s \citeyearpar{Auer:2002:FAM:599614.599677} classic UCB policy this property can be achieved by slightly enlarging the confidence region.

\begin{thm}\label{thm:alg2bound} 
The total expected regret of $\MultiSbm$ (Algorithm~\ref{alg:K TBMs}) in the UBDB game is 
\vspace{-.06in}
\begin{align*}
 O\!\left(\! H \alpha\ln T + 
 H\alpha\left(\!K\ln K\!+\!K\ln\ln T -\!\!\sum\nolimits_{x \neq x^*} \!\ln \Delta_x\!\right) \!\!\right)\!, 
\end{align*}

\vspace{-.11in}
assuming the policy of the SBMs defined in Line~\ref{line:SBMsInit} is $\alpha$-robust for $\alpha=\max(3, \ln(K)/\ln\ln(T))$. The robustness can be ensured by choosing the UCB policy (Algorithm~\ref{alg:ucb}) for the SBM with parameter $\alpha$.
\end{thm}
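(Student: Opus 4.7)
My plan is to reduce the total dueling regret to a sum of per-SBM MAB regrets, then bound each contribution using the $\alpha$-robustness of UCB. First, since $x_{t+1}=y_t$ in Algorithm~\ref{alg:K TBMs}, the expected dueling regret telescopes to $\E[\Rmean(T)] \leq \sum_{t=1}^T \Delta_{y_t} + O(1) = \sum_{x'\in X}\sum_{y\in X} T_{x',y}\Delta_y + O(1)$, where $T_{x',y}$ denotes the number of rounds in which $S_{x'}$ is invoked and advances to $y$. Each inner sum is exactly the dueling contribution attributable to a single SBM.

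Viewed in isolation, $S_{x'}$ plays a stochastic MAB whose reward for action $y$ is the Bernoulli variable $b_t$ with mean $\philin(\mu(y),\mu(x'))=(1+\mu(y)-\mu(x'))/2$, so the unique best arm is $x^*$ and the per-arm gap is $\Delta_y/2$. Let $N_{x'}$ be the (random) number of invocations of $S_{x'}$ up to time $T$. Applying the UCB expected-regret bound of Lemma~\ref{lem:ucb robust} to $S_{x'}$ (the factor of $2$ from halved gaps is absorbed into $H$), I obtain
\begin{align*}
\E\bigl[\textstyle\sum_y T_{x',y}\Delta_y \bigm| N_{x'}\bigr]\;=\;O(\alpha H\ln N_{x'} + K),
\end{align*}
and Jensen's inequality then reduces the task to bounding $\E[N_{x'}]$.

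The heart of the argument is the bound on $\E[N_{x'}]$ when $x'\neq x^*$. Since $S_{x'}$ is invoked only when some $S_{x''}$ plays $x'$, we have $N_{x'} = \sum_{x''\neq x'}T_{x'',x'} + O(1)$, and for each $x''$ the $\alpha$-robustness of $S_{x''}$ controls $T_{x'',x'}$ (with target-gap $\Delta_{x'}$ in $S_{x''}$'s game). The subtlety is that the horizon of $S_{x''}$ is itself the random quantity $N_{x''}\leq T$; I would handle this by coupling $S_{x''}$ with a copy that is forced to run the full $T$ rounds, so that the actual count $T_{x'',x'}$ is stochastically dominated by the tail-controlled count of the coupled copy. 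The robustness bound then yields $\Pr[T_{x'',x'}>s]\leq \tfrac{2}{\alpha}(s/2)^{-\alpha}$ for $s\geq 4\alpha\Delta_{x'}^{-2}\ln T$, and integrating this tail gives $\E[T_{x'',x'}]\leq 4\alpha\Delta_{x'}^{-2}\ln T + O(1)$. Summing over $x''$ yields $\E[N_{x'}] = O(\alpha K\ln T/\Delta_{x'}^2)$, so $\ln \E[N_{x'}] = O(\ln(\alpha K) + \ln\ln T + 2\ln(1/\Delta_{x'}))$. For $x' = x^*$ the trivial $N_{x^*}\leq T$ suffices.

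Putting it all together, the choice $\alpha = \max(3,\ln K/\ln\ln T)$ makes $\ln\alpha = O(\ln\ln T)$, so summing the per-SBM bound from the second paragraph over all $x'\in X$ yields
\begin{align*}
\E[\Rmean(T)] \;=\; O\bigl(H\alpha\ln T + H\alpha(K\ln K + K\ln\ln T - \textstyle\sum_{x\neq x^*}\ln\Delta_x)\bigr),
\end{align*}
which is the claim. The main obstacle is the random-horizon issue in the third paragraph: Lemma~\ref{lem:ucb robust}'s tail bound is stated for a deterministic horizon, but the horizons $N_{x''}$ here are data-dependent. Once the coupling argument ports the tail bound to the random-horizon setting, the remainder of the proof is essentially mechanical tail integration, Jensen's inequality, and summing the per-SBM bounds.
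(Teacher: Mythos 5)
Your overall architecture matches the paper's: telescope the dueling regret via $x_{t+1}=y_t$ into per-SBM internal regrets, observe that each $S_{x'}$ faces a stochastic MAB with gaps $\Delta_y/2$, and use $\alpha$-robustness to show that each suboptimal $S_{x'}$ is invoked only $O(\alpha K\ln T/\Delta_{x'}^2)$ times, so that its logarithmic regret contributes $\ln\ln T+\ln K+\ln(1/\Delta_{x'})$ rather than $\ln T$. Your bound on $\E[N_{x'}]$ via monotonicity of the counts (the coupling with a full-horizon run) is sound and is essentially the paper's inequality bounding $\newtau_x(T)$.

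The gap is in the composition step $\E\bigl[\sum_y T_{x',y}\Delta_y \mid N_{x'}\bigr]=O(\alpha H\ln N_{x'}+K)$ followed by Jensen. Lemma~\ref{lem:ucb robust} bounds the expected regret of UCB run for a \emph{deterministic} horizon, whereas $N_{x'}$ is a data-dependent random time correlated with $S_{x'}$'s own trajectory: every time $S_{x'}$ plays $y_t=x'$ it forces another invocation of itself at $t+1$, and the other SBMs' invocation counts are coupled to $S_{x'}$'s choices through the shared play sequence. Conditioning on $N_{x'}$ therefore biases the law of $S_{x'}$'s observations, and the fixed-horizon expectation bound does not transfer. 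Monotonicity alone only yields (regret at time $N_{x'}$) $\leq$ (regret at time $T$), which is exactly the trivial $K$-fold bound the theorem must beat; to do better one must control the events ``$N_{x'}$ is small'' and ``the regret curve of $S_{x'}$ is small'' \emph{simultaneously}. The paper achieves this by replacing expectations with high-probability statements: it takes tail bounds on both the counts $\newtau_x(T)$ and the regret curves $R_{xy}(T')$ at a doubly-exponential grid of $O(\ln\ln T)$ internal times, packages them into a single dominating variable $Z$ with tail $\Pr[Z\geq s]\leq 2s^{-\alpha}+K(s\ln T)^{-\alpha}$, bounds $R_{xy}(\newtau_x(T))$ deterministically on the event $\{Z\leq s\}$, and integrates the tail. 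Your third-paragraph tail bounds are the right ingredients, but you need them at all grid scales $e^{e^p}$ (not only at horizon $T$) and you must combine them on a common good event rather than through a conditional expectation and Jensen.
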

Note that achieving $(\alpha=3)$-robustness requires implementing a variant of UCB with a slight modification of the confidence interval parameter in each SBM. Therefore, if the horizon $T$ is large enough so that $\ln\ln T >  (\ln K)/3$, then the total
regret is comparable to that of UCB in the standard MAB game.

The proof of the theorem is deferred to Appendix~\ref{sec:proofthm2}.  The main idea behind the proof is showing that a certain
``positive feedback loop'' emerges: if the expected regret incurred by the right arm at some time $t$ is low, then there is a higher chance that $x^*$ will be played as the left arm at time $t+1$.  Conversely, if any fixed arm (in particular, $x^*$) is played very often as the left arm, then the expected regret incurred by  the right arm decreases rapidly.

\section{A Heuristic Approach} \label{sec:double sbm}

In this section we describe a heuristic called $\DoubleSbm$ for playing UBDB, which shows extremely good performance in our experiments.  Unfortunately, as of yet we were unable to prove performance bounds that explain its empirical performance.
$\DoubleSbm$ uses two SBMs, corresponding to \emph{left} and \emph{right}.
In each round the pair of arms is chosen according to the strategies of the two corresponding SBMs.  
 The SBM corresponding to the chosen arm receives a feedback of $1$ while the other receives $0$. The formal algorithm is described in Algorithm~\ref{alg:DoubleSbm}.

The intuition for this idea comes from analysis of an adversarial version of UDBD, in which it can be easily shown that the resulting expected regret of $\DoubleSbm$ is at most a constant times the regret of the two SBMs which emulate an algorithm for  adversarial MAB.
(We omit the exact discussion and analysis for the adversarial counterpart of UDBD in this extended abstract.)
We conjecture that the regret of $\DoubleSbm$ is asymptotically  bounded by the combined regret of the algorithms hiding in the SBM's, with (possibly) a small overhead.  Proving this conjecture is especially interesting for settings in which $X$ is infinite
and a MAB algorithm with polylogarithmic regret exists. Indeed, previous literature based on tournament elimination strategies does not apply to infinite $X$, and $\Doubler$ presented earlier is probably suboptimal due to the extra log-factor it incurs.

Proving the conjecture appears to be tricky due to the fact that  the left (resp. right) SBM does not see a stochastic environment, because its feedback depends on non-stochastic choices made by the right (resp. left) SBM.    
Perhaps there exist bad settings where both strategies would be mutually `stuck' in some sub-optimal state. We leave the analysis of this approach as an interesting problem for future research.  Our experiments will  nevertheless include $\DoubleSbm$.

\begin{algorithm}[t!] 
\caption{($\DoubleSbm$): Reduction to two SBMs.}
\label{alg:DoubleSbm}
\begin{algorithmic}[1]
	\STATE  $S_L,S_R\leftarrow$ two new SBMs over $X$
	\STATE  $\rst(S_L), \rst(S_R), t\leftarrow 1$
	\WHILE {true}
		\STATE $x_t \leftarrow \advnce(S_L)$;  $y_t \leftarrow \advnce(S_R)$
		\STATE play $(x_t, y_t)$, observe choice $b_t \in\{0,1\}$ 
		\STATE $\fdbck(S_L, {\bf 1}_{b_t = 0})$; $\fdbck(S_R, {\bf 1}_{b_t = 1})$
		\STATE $t\leftarrow t+1$
	\ENDWHILE
\end{algorithmic}
\end{algorithm}

\section{Notes}

\paragraph{Lower Bound:} Our results contain upper bounds for the regret of the  dueling bandit problem. We note that a matching lower bound, up to logarithmic terms can be shown via a simple reduction to the MAB problem. This reduction is the reverse of the others presented here: simulate a SBM by using a UBDB solver. It is an easy exercise to obtain such a reduction whose regret w.r.t.\ the MAB problem is at most twice the regret of the dueling bandit problem. It follows that the same lower bounds of the classic MAB problem apply to the UBDB problem.

\paragraph{Adversarial Setting:} One may also consider an adversarial setting for the UBDB problem. Here, utilities of the arms that are assumed to be constant in the stochastic case are assumed to change each round in some arbitrary way. We do not elaborate on this setting due to space constraints but mention that (a) a lower bound of $\sqrt{KT}$ matching that of the MAB problem is valid in the UDBD setting, and (b) the \DoubleSbm \ algorithm, when using SBM solvers for the adversarial setting, can be shown to obtain the same regret bounds of said SBM solvers.

\section{Experiments} \label{sec:experiments}

We now present several experiments comparing our algorithms with baselines consisting of the state-of-the-art \textsc{Interleaved Filter} (\InF) \cite{DBLP:journals/jcss/YueBKJ12} and \textsc{Beat The Mean Bandit} (\BTMB) \cite{DBLP:conf/icml/YueJ11}.  Our experiments are exhaustive, as we include scenarios for which no bounds were derived (e.g. nonlinear link functions), as well as the much more general scenario in which $\BTMB$ was analyzed \cite{DBLP:conf/icml/YueJ11}.

Henceforth, the  set $X$ of arms  is  $\{A,B,C,D,E,F\}$.  For applications such as  the interleaving search engines \cite{Chapelle/etal/12a}, $6$ arms is realistic.
We considered $5$  choices of the expected value function $\mu(\cdot)$ and $3$  link functions\footnote{To be precise, the actual expected utility vector $\mu$ was a random permutation of the one given in the table.  This was done to prevent initialization bias arising from the specific implementation of the algorithms.}\footnote{Note that in row 'arith', $\mu(2)..\mu(6)$ form an arithmetic progression, and  in row 'geom' they form a geometric progression.}.  

{ \footnotesize
\begin{center}
\begin{tabular}{|c|c|}
\hline 
linear & $\phi(x,y)= (1+x-y)/2$ \\
\hline
natural & $\phi(x,y)=x/(x+y)$ \\
\hline
logit & $\phi(x,y)=(1+\exp\{y-x\})^{-1}$ \\
\hline
\end{tabular}
\end{center}
\vspace{-0.14in}
%
%

\begin{center}
\begin{tabular}{|c|c|c|c|c|c|c|}
\hline 
Name  & $\mu(A)$ & $\mu(B)$ & $\mu(C)$ & $\mu(D)$ & $\mu(E)$ & $\mu(F)$ \\
\hline \hline
1good & $0.8$ & $0.2$ & $0.2$ &$0.2$ &$0.2$ &$0.2$ \\
\hline
2good & $0.8$ & $0.7$ & $0.2$ & $0.2$ & $0.2$ & $0.2$ \\
\hline
3good & $0.8$ & $0.7$ & $0.7$ & $0.2$ & $0.2$ & $0.2$ \\
\hline
arith & $0.8$ & $0.7$ & $0.575$  & $0.45$  & $0.325$ & $0.2$ \\
\hline
geom & $0.8$ & $0.7$ & $0.512$ & $0.374$ & $0.274$ & $0.2$ \\
\hline
\end{tabular}
\end{center}
}

For each $15$ combinations of arm values and link function we ran all 5 algorithms $\InF$, $\BTMB$, $\MultiSbm$, $\Doubler$, and $\DoubleSbm$ with random inputs spanning a time horizon of up to $32000$.

We also set out to test our algorithms in 
 a scenario defined in \cite{DBLP:conf/icml/YueJ11}. We refer to this setting as YJ.
  Unlike our setting, where choice probabilities are derived from (random)
latent utilities together with a link function, in YJ an underlying 
unknown fixed matrix $(P_{xy})$ is assumed, where $P_{xy}$ is the probability
of arm $x$ chosen given the pair $(x,y)$.   The matrix satisfies very mild constraints.
Following  \cite{DBLP:conf/icml/YueJ11}, define  $\epsilon_{xy} = (P_{xy} - P_{yx})/2$.
The main constraint is, for some unknown total order $\succ$ over $X$,  the imposition $x \succ y \iff \epsilon(x,y)>0$.
The optimal arm $x^*$ is maximal in the total
order.  The regret incurred by playing the pair $(x_t,y_t)$ at time $t$ is $\frac 1 2(\epsilon_{x^*x_t} + \epsilon_{x^* y_t})$.

The $\BTMB$ algorithm \cite{DBLP:conf/icml/YueJ11} proposed for YJ is, roughly speaking, a tournament elimination scheme, in which a working set of candidate arms is maintained. Arms are removed from the set whenever there is high certainty about their suboptimality.
Although the YJ setting is more general than ours,  our algorithms can be executed without any modification,
giving rise to an interesting comparison with $\BTMB$.
For this comparison, we shall use the same  matrix $(\epsilon_{xy})_{x,y\in X}$ as in  \cite{DBLP:conf/icml/YueJ11}, which was empirically estimated from an operational search engine.

{ \scriptsize
\begin{center}
\begin{tabular}{|c|c|c|c|c|c|c|c|} 
\hline 
		& $A$	& $B$	& $C$	& $D$	& $E$ & $F$	\\
\hline 
$A$ & $0$		& $0.05$	& $0.05$	& $0.04$	& $0.11$	& $0.11$	\\
\hline
$B$ & $-0.05$	& $0	$	& $0.05$	& $0.06$ 	& $0.08$	& $0.10$	\\
\hline
$C$ & $-0.05$	& $-0.05$	& $0	$	& $0.04$ 	& $0.01$	& $0.06$	\\
\hline
$D$ & $-0.04$	& $-0.04$	& $-0.04$	& $0 $		& $0.04$	& $0$	\\
\hline
$E$ & $-0.11$	& $-0.08$	& $-0.01$	& $-0.04$	& $0	$	& $0.01$	\\
\hline
$F$ & $-0.11$	& $-0.10$	& $-0.06$	& $0	$	& $-0.01$	& $0$	\\
\hline
\end{tabular} 
\end{center}
}
(Note that $x^*=A \succ B \succ C \succ D \succ E \succ F$.)

\paragraph{Experiment Results and Analysis}
Figure~\ref{fig:exp} contains the expected regrets of these described scenarios as a function of the $\log$ (to the base $2$) of the time, averaged over $400$ executions, with one standard deviation confidence bars. The experiments reveal some interesting results. First, the heuristic approach is superior to all others in all of the settings. Second, among the other algorithms, the top two are the algorithms $\InF$ and $\MultiSbm,$ where $\MultiSbm$ is superior in a wide variety of scenarios.

\begin{figure*}[ht!]

\begin{center}
\scalebox{0.62}{\includegraphics{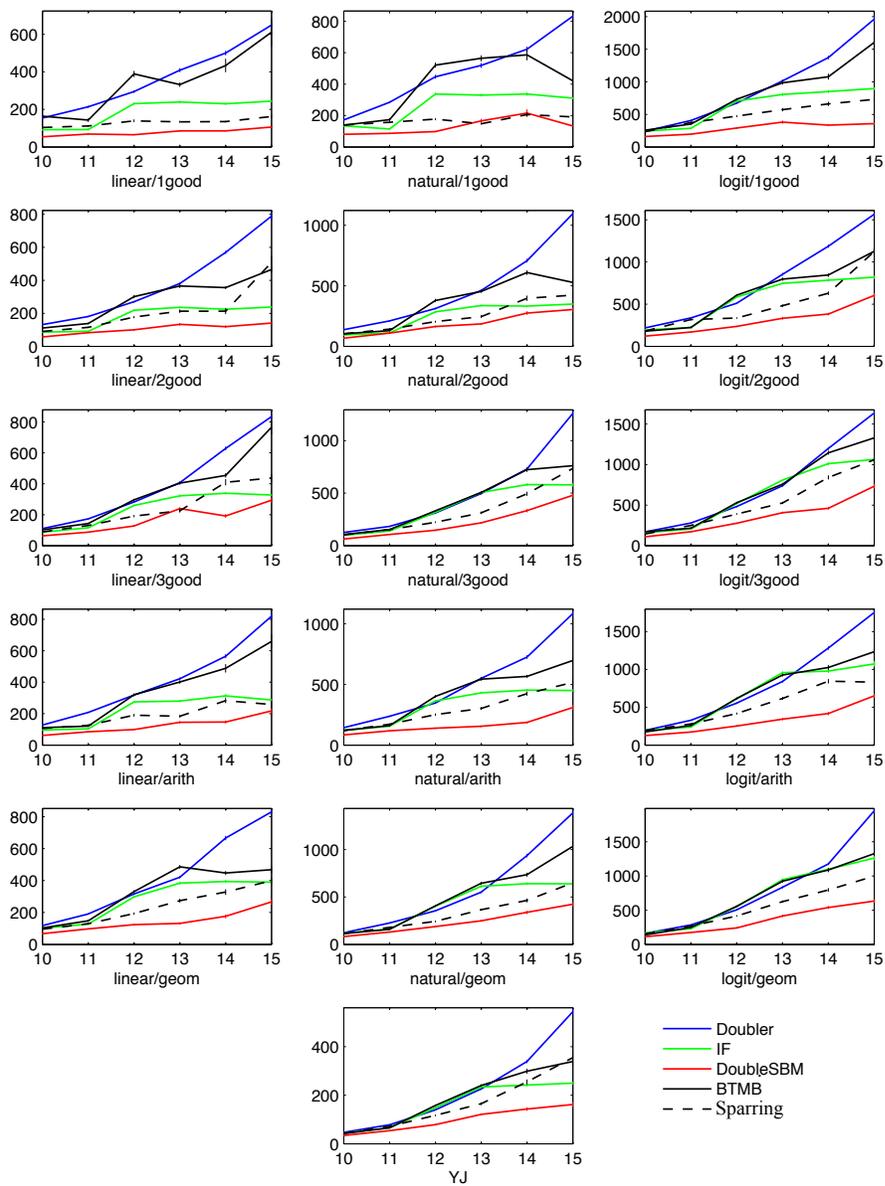}
}
\end{center}
\vspace*{-1.4cm}
\caption{
\small
Expected regret plots, averaged over $400$ runs for each of the 16 scenarios, and 5 algorithms.  The x-axis is the $\log$ to the base $2$ of the time, and the y-axis is the
regret, averaged over $400$ executions (with $1$ standard deviation confidence bars).
}
\vspace{-0.6cm}
\label{fig:exp}
\end{figure*}

\section{Future work}\label{sec:future} 
  We dealt with choice in sets of size $2$. What happens in cases where the player chooses from larger sets? We also analyzed only the linear choice function.
See Appendix~\ref{sec:more_general} for an extension of the results in Section~\ref{sec:stochastic_finite} to other link functions.

Both  algorithms $\Doubler$ and $\MultiSbm$ treated the left and right sides asymmetrically. 
This did not allow us  to consider distinct expected valuation functions for the left and right positions.
\footnote{Such a case is actually motivated in a setting where, say, the perceived user valuation of items appearing lower in the list are lower, giving rise to bias toward items appearing at the top.}
Algorithm $\DoubleSbm$ \emph{is} symmetric, further motivating the question of proving its performance guarantees.
 
 

%
Proving (or refuting) the conjecture in Section~\ref{sec:double sbm} regarding the regret of $\DoubleSbm$ is an interesting open problem.
Much like our proof idea for the guarantee of $\MultiSbm$, there is clearly a positive feedback loop between the
two SBM's in $\DoubleSbm$: the more often the left (resp. right) arm is played optimally, the right (resp. left) arm
would experience an environment which is closer to that of the standard MAB, and would hence incur expected regret approximately
that of the SBM it implements.




{


%
%

\newcommand{\placetextbox}[3]{
  \setbox0=\hbox{#3}
  \AddToShipoutPictureFG*{
    \put(\LenToUnit{#1\paperwidth},\LenToUnit{#2\paperheight}){\vtop{{\null}\makebox[0pt][c]{
\scalebox{0.7}{
\begin{tikzpicture}
\filldraw[white]
 (0,0) rectangle (2,0.4);
\filldraw[black] 
(0.8,0.2) node {#3};
\end{tikzpicture}
}}}}
  }%
}%



\section*{Acknowledgments}
The authors thank anonymous reviewers for thorough and insightful reviews.
This research was funded in part by NSF Awards IIS-1217686 and IIS-1247696, 
a Marie Curie Reintegration Grant PIRG07-GA-2010-268403, an Israel Science Foundation
grant  1271/33 and a Jacobs Technion-Cornell Innovation Institute grant.
%
}

\bibliographystyle{icml2014}
\bibliography{comparative_UCB,joachims, bettermab, cucb}

\begin{thebibliography}{20}
\providecommand{\natexlab}[1]{#1}
\providecommand{\url}[1]{\texttt{#1}}
\expandafter\ifx\csname urlstyle\endcsname\relax
  \providecommand{\doi}[1]{doi: #1}\else
  \providecommand{\doi}{doi: \begingroup \urlstyle{rm}\Url}\fi

\bibitem[Agarwal et~al.(2010)Agarwal, Dekel, and
  Xiao]{DBLP:conf/colt/AgarwalDX10}
Agarwal, Alekh, Dekel, Ofer, and Xiao, Lin.
\newblock Optimal algorithms for online convex optimization with multi-point
  bandit feedback.
\newblock In \emph{COLT}, pp.\  28--40, 2010.

\bibitem[Ailon et~al.(2012)Ailon, Begleiter, and
  Ezra]{DBLP:journals/jmlr/AilonBE12}
Ailon, Nir, Begleiter, Ron, and Ezra, Esther.
\newblock Active learning using smooth relative regret approximations with
  applications.
\newblock \emph{Journal of Machine Learning Research - Proceedings Track},
  23:\penalty0 19.1--19.20, 2012.

\bibitem[Antos et~al.(2012)Antos, Bart{\'o}k, P{\'a}l, and
  Szepesv{\'a}ri]{antos2012toward}
Antos, Andr{\'a}s, Bart{\'o}k, G{\'a}bor, P{\'a}l, D{\'a}vid, and
  Szepesv{\'a}ri, Csaba.
\newblock Toward a classification of finite partial-monitoring games.
\newblock \emph{Theoretical Computer Science}, 2012.

\bibitem[Auer et~al.(2002)Auer, Cesa-Bianchi, and
  Fischer]{Auer:2002:FAM:599614.599677}
Auer, Peter, Cesa-Bianchi, Nicol\`{o}, and Fischer, Paul.
\newblock Finite-time analysis of the multiarmed bandit problem.
\newblock \emph{Mach. Learn.}, 47\penalty0 (2-3):\penalty0 235--256, May 2002.

\bibitem[Avner et~al.(2012)Avner, Mannor, and Shamir]{DBLP:conf/icml/AvnerMS12}
Avner, Orly, Mannor, Shie, and Shamir, Ohad.
\newblock Decoupling exploration and exploitation in multi-armed bandits.
\newblock In \emph{ICML}, 2012.

\bibitem[Bart{\'o}k et~al.(2012)Bart{\'o}k, Zolghadr, and
  Szepesv{\'a}ri]{bartok2012adaptive}
Bart{\'o}k, G{\'a}bor, Zolghadr, Navid, and Szepesv{\'a}ri, Csaba.
\newblock An adaptive algorithm for finite stochastic partial monitoring.
\newblock \emph{arXiv preprint arXiv:1206.6487}, 2012.

\bibitem[Chapelle et~al.(2012)Chapelle, Joachims, Radlinski, and
  Yue]{Chapelle/etal/12a}
Chapelle, O., Joachims, T., Radlinski, F., and Yue, Yisong.
\newblock Large-scale validation and analysis of interleaved search evaluation.
\newblock \emph{ACM Transactions on Information Systems (TOIS)}, 30\penalty0
  (1):\penalty0 6:1--6:41, 2012.

\bibitem[Dani et~al.(2008)Dani, Hayes, and Kakade]{DBLP:conf/colt/DaniHK08}
Dani, Varsha, Hayes, Thomas~P., and Kakade, Sham~M.
\newblock Stochastic linear optimization under bandit feedback.
\newblock In \emph{COLT}, pp.\  355--366, 2008.

\bibitem[Feige et~al.(1994)Feige, Raghavan, Peleg, and
  Upfal]{Feige:1994:CNI:196751.196817}
Feige, Uriel, Raghavan, Prabhakar, Peleg, David, and Upfal, Eli.
\newblock Computing with noisy information.
\newblock \emph{SIAM J. Comput.}, 23\penalty0 (5):\penalty0 1001--1018, October
  1994.

\bibitem[Freund et~al.(2003)Freund, Iyer, Schapire, and
  Singer]{DBLP:journals/jmlr/FreundISS03}
Freund, Yoav, Iyer, Raj~D., Schapire, Robert~E., and Singer, Yoram.
\newblock An efficient boosting algorithm for combining preferences.
\newblock \emph{Journal of Machine Learning Research}, 4:\penalty0 933--969,
  2003.

\bibitem[Herbrich et~al.(2000)Herbrich, Graepel, and Obermayer]{HerbrichGO00}
Herbrich, R, Graepel, Thore, and Obermayer, Klaus.
\newblock Large margin rank boundaries for ordinal regression.
\newblock \emph{Book chapter, Advances in Large Margin Classifiers}, 2000.

\bibitem[Joachims et~al.(2007)Joachims, Granka, Pan, Hembrooke, Radlinski, and
  Gay]{Joachims/etal/07a}
Joachims, T., Granka, L., Pan, Bing, Hembrooke, H., Radlinski, F., and Gay, G.
\newblock Evaluating the accuracy of implicit feedback from clicks and query
  reformulations in web search.
\newblock \emph{ACM Transactions on Information Systems (TOIS)}, 25\penalty0
  (2), April 2007.

\bibitem[Karp \& Kleinberg(2007)Karp and
  Kleinberg]{Karp:2007:NBS:1283383.1283478}
Karp, Richard~M. and Kleinberg, Robert.
\newblock Noisy binary search and its applications.
\newblock In \emph{Proceedings of the eighteenth annual ACM-SIAM symposium on
  Discrete algorithms}, SODA '07, pp.\  881--890, 2007.

\bibitem[Mannor \& Shamir(2011)Mannor and Shamir]{DBLP:conf/nips/MannorS11}
Mannor, Shie and Shamir, Ohad.
\newblock From bandits to experts: On the value of side-observations.
\newblock In \emph{NIPS}, pp.\  684--692, 2011.

\bibitem[Piccolboni \& Schindelhauer(2001)Piccolboni and
  Schindelhauer]{piccolboni2001discrete}
Piccolboni, Antonio and Schindelhauer, Christian.
\newblock Discrete prediction games with arbitrary feedback and loss.
\newblock In \emph{Computational Learning Theory}, pp.\  208--223. Springer,
  2001.

\bibitem[Robbins(1952)]{citeulike:761755}
Robbins, H.
\newblock Some aspects of the sequential design of experiments.
\newblock \emph{Bulletin of the AMS}, 58:\penalty0 527--535, 1952.

\bibitem[Train(2009)]{KTrain09}
Train, Keneth.
\newblock \emph{Discrete Choice Methods with Simulation}.
\newblock Cambridge University Press, 2009.

\bibitem[Yue \& Joachims(2009)Yue and Joachims]{Yue/Joachims/09a}
Yue, Yisong and Joachims, T.
\newblock Interactively optimizing information retrieval systems as a dueling
  bandits problem.
\newblock In \emph{International Conference on Machine Learning (ICML)}, pp.\
  151--159, 2009.

\bibitem[Yue \& Joachims(2011)Yue and Joachims]{DBLP:conf/icml/YueJ11}
Yue, Yisong and Joachims, Thorsten.
\newblock Beat the mean bandit.
\newblock In \emph{ICML}, pp.\  241--248, 2011.

\bibitem[Yue et~al.(2012)Yue, Broder, Kleinberg, and
  Joachims]{DBLP:journals/jcss/YueBKJ12}
Yue, Yisong, Broder, Josef, Kleinberg, Robert, and Joachims, Thorsten.
\newblock The k-armed dueling bandits problem.
\newblock \emph{J. Comput. Syst. Sci.}, 78\penalty0 (5):\penalty0 1538--1556,
  2012.

\end{thebibliography}
\appendix


\section{Robustness of the UCB algorithm} \label{app:UCB robust}
%

For completeness, we present a proof of robustness for the UCB algorithm, presented as Algorithm~\ref{alg:ucb} below. Note that we did not make an effort to bound the constants in the proof. We start by presenting Chernoff's inequality providing a tail bound for estimations of variables contained in $[0,1]$.

\begin{lem} \label{lem:chrnf 01}
Let $Y_1,\ldots,Y_t$ be i.i.d variables supported in $[0,1]$. Then for any $\eps>0$ it holds that
$$ \Pr\left[ \frac{1}{t}\sum_{i=1}^t Y_i - \E[Y_i] > \eps \right] \leq e^{-2t\eps^2} $$
\end{lem}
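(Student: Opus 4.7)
The statement is the classical Hoeffding inequality for i.i.d.\ bounded random variables, so the plan is to follow the standard Chernoff--Hoeffding recipe. Let $Z_i = Y_i - \E[Y_i]$, so that each $Z_i$ is a zero-mean random variable supported in an interval of length at most $1$ (namely $[-\E[Y_i], 1-\E[Y_i]]$), and we want to bound $\Pr[\sum_i Z_i > t\eps]$.

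First, I would introduce an auxiliary parameter $s > 0$ and apply Markov's inequality to the exponential:
\begin{equation*}
\Pr\!\left[\sum_{i=1}^t Z_i > t\eps\right] = \Pr\!\left[e^{s\sum_i Z_i} > e^{st\eps}\right] \le e^{-st\eps}\,\E\!\left[e^{s\sum_i Z_i}\right].
\end{equation*}
By independence of the $Y_i$'s (and hence of the $Z_i$'s), the expectation factorizes: $\E[e^{s\sum_i Z_i}] = \prod_{i=1}^t \E[e^{sZ_i}]$.

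Second, I would prove Hoeffding's lemma, which is the main technical step: for any zero-mean random variable $Z$ supported in $[a,b]$,
\begin{equation*}
\E[e^{sZ}] \le e^{s^2(b-a)^2/8}.
\end{equation*}
The argument uses convexity of $x \mapsto e^{sx}$ to dominate it by the chord through $(a, e^{sa})$ and $(b, e^{sb})$, take expectations using $\E[Z]=0$, and then analyze the resulting function $\psi(u) := \log(p e^{-u\theta} + (1-p)e^{u(1-\theta)})$ (where $u = s(b-a)$ and $\theta = -a/(b-a)$) via its Taylor expansion around $u=0$, showing $\psi''(u) \le 1/4$ and thus $\psi(u) \le u^2/8$. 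Since each $Z_i$ has range length at most $1$, this gives $\E[e^{sZ_i}] \le e^{s^2/8}$.

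Third, combining the last two steps yields
\begin{equation*}
\Pr\!\left[\tfrac{1}{t}\sum_i Y_i - \E[Y_i] > \eps\right] \le e^{-st\eps} \cdot e^{ts^2/8},
\end{equation*}
and I would optimize over $s>0$ by choosing $s = 4\eps$, which gives the exponent $-st\eps + ts^2/8 = -2t\eps^2$, proving the claim.

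The main obstacle is the Hoeffding lemma bound $\E[e^{sZ}] \le e^{s^2(b-a)^2/8}$; everything else is routine algebra. The cleanest route is via the convexity/chord bound and the calculus estimate $\psi''(u)\le 1/4$, which is essentially the observation that the variance of a Bernoulli-like random variable on $\{-\theta,1-\theta\}$ is at most $1/4$. Once that lemma is in hand, the rest of the proof is a one-line optimization of $s$.
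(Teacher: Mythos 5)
Your proof is correct: it is the standard Chernoff--Hoeffding argument (exponentiate, apply Markov's inequality, factorize by independence, invoke Hoeffding's lemma $\E[e^{sZ}] \le e^{s^2(b-a)^2/8}$ for each zero-mean bounded term, and optimize $s = 4\eps$ to obtain the exponent $-2t\eps^2$). Note that the paper does not prove this lemma at all; it simply quotes it as the classical Chernoff/Hoeffding inequality before using it in the robustness analysis of UCB, so your write-up supplies a proof where the paper offers none, and the route you take is exactly the canonical one.
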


Recall that in our setting, there are $K$ arms, each with an expected reward. For convenience we assume the set of bandits $X$ is the set $\{1,\ldots,K\}$ and further assume for the purpose of the analysis that arm $1$ has the largest expected reward. We denote by $\Delta_i$ the difference between the reward of arm 1 and that of arm $i$.

\begin{proof} [Proof of Lemma~\ref{lem:ucb robust}]
For convenience, define $\beta=\alpha+2$ where $\alpha$ is the robustness parameter given as input to the algorithm. For $i>1$, define
$$ u_i(t) = 2\beta \ln(t)/\Delta_i^2 $$
If at time $t$, arm $i$ where $i>1$ (i.e. $i$ is suboptimal) was chosen, one of the following must be true
\begin{enumerate}
\item $\newtau_i(t) < u_i(t)$

\item $\hat{\mu_i} > \mu_i + \sqrt{\frac{\beta\ln(t)}{2\newtau_i(t)}} $

\item $\hat{\mu_1} +  \sqrt{\frac{\beta \ln(t)}{2\newtau_1(t)}} < \mu_1  $
\end{enumerate}
Here, $\newtau_i(t),\newtau_1(t)$ denote the number of times arms $i$ and $1$ (the optimal arm) were pulled up to time $t$.
Indeed, if all 3 are false we have 
$$ \hat{\mu_1} +  \sqrt{\frac{\beta \ln(t)}{2\newtau_1(t)}} \geq \mu_1 = \mu_i + \Delta_i \geq $$
$$  \mu_i + 2\sqrt{\frac{\beta\ln(t)}{2\newtau_i(t)}} \geq \hat{\mu_i} + \sqrt{\frac{\beta\ln(t)}{2\newtau_i(t)}}$$
and the $i$'th arm cannot be chosen.  Hence, denoting $\newtau_i(T)$ the number of times arm $i$ is queried in a total budget of $T$ queries, we have
$$ \E[\newtau_i(T) - u_i(T)] \leq \sum_{t=u_i(T)+1}^T \Pr[(2) \ \mathrm{or} \ (3)] $$
To bound the probability of event $(2)$ occurring, we use Chernoff's inequality (Lemma~\ref{lem:chrnf 01})
$$ \Pr[(2)] \leq \Pr\left[ \exists \newtau_i \in [t] : \ \hat{\mu_i} > \mu_i + \sqrt{\frac{\beta\ln(t)}{2\newtau_i}} \right] \leq $$
$$t\cdot t^{-\beta} = t^{1-\beta} .$$
The bound for event $(3)$ is analogous. It follows that the probability of events $(2)$ or $(3)$ occurring is bounded by $2t^{1-\beta}$ and
$$\E[\newtau_i(T) - u_i(T)]  \leq \sum_{t=u_i(T)+1}^T 2t^{1-\beta} \leq $$
\begin{equation} \label{eq:rob simple}
 \frac{2}{\beta-2} \left( 2 \beta \ln(T)  \Delta_i^{-2} \right)^{2-\beta} 
\end{equation}
Proving the bound on the expected regret is now a matter of simple calculation
$$\E[R] = \sum_{i>1} \E[\Delta_i \cdot \newtau_i(T)] \leq $$
$$\frac{2\sum_i \Delta_i}{\beta-2} + \sum_{i>1} 2\beta \ln(T)/\Delta_i \leq \frac{2K}{\beta-2} + 2\beta H\ln(T)$$

We proceed to prove the high probability bounds on the number of pulls of a suboptimal arm. Denote by $\newtau_i^s(T)$ the number of times arm $i$ was chosen starting from the time point $t \geq s$. Assuming $s\geq  2 \beta \ln(T)  \Delta_i^{-2}$, by the same arguments leading to equation \ref{eq:rob simple} we have
$$ \E[\newtau_i^s(T) - u_i(T)] \leq \frac{2}{\beta-2} s^{2-\beta} $$
Assume that arm $i$ was chosen at least $s$ times for some 
$$s \geq \frac{4(\beta+2)\ln(T)}{\Delta_i^{2}} $$
it follows that $\newtau_i^{s-u_i(T)-1}(T) \geq u_i(T)+1$. The probability of this happening is bounded by Markov's inequality by
$$ \Pr[ \newtau_i(T) \geq s ] \leq \Pr\left[\newtau_i^{s-u_i(T)-1}(T)-u_i(T) \geq 1 \right] \leq $$
$$\E\left[\newtau_i^{s-u_i(T)-1}(T) - u_i(T)\right] \leq $$ 
$$\frac{2}{\beta-2} (s-u_i(T)-1)^{2-\beta} \leq \frac{2}{\beta-2}  \left(\frac{s}{2}\right)^{2-\beta} $$
The last inequality holds since 
$$s \geq \frac{4(\beta+2)\ln(T)}{\Delta_i^{2}} \geq 2+  \frac{4\beta\ln(T)}{\Delta_i^{2}} = 2u_i(T)+2  $$
\end{proof}

\section{Proof of Theorem~\ref{thm:infinite}}\label{sec:proof:thm:infinite}

Let $B(T)$ denote the supremum of the expected regret  of the SBM S (defined in line\ref{line:SBM} of Algorithm~\ref{alg:infinite})
after $T$ steps, over  all possible utility distributions of the arm set $X$.

Fix a phase $i$ in the algorithm.  The length  $T_i$ of the phase is exactly $ 2^i$. 
For all time steps $t$ inside the phase, the left bandit $x_t$
is drawn from some fixed distribution.  Let $\mu'$ denote the common expectation $\E[u_t]=\E_{x_t}[u_t|x_t]$ of the reward of the left
arm in all steps $t$  in the phase.  Now, the SBM $S$ (defined in Line~\ref{line:SBM}) is playing  a standard MAB game
over the set $X$ with binary rewards.  Let $b_t$ denote the binary reward in the $t$'th step (within the phase).
By construction,
\begin{equation}\label{eq:ttttt}  \E[b_t|v_t, u_t] = \frac{v_t - u_t+1}{2} \in [0,1]\ .\end{equation}
By conditional expectation, for all $y\in X$, 
\begin{equation}\label{eq:ttttt1}  \E[b_t|y_t=y] = \frac{\mu(y) - \mu' +1}{2} \in [0,1]\ .\end{equation}

Note that the arm with highest expected reward is $y=x^*$.
By the definition of  the bound function $B(T)$, the total expected regret (in the traditional 
MAB sense) of the SBM $S$ in the phase is at most $B(T_i) = B(2^i)$.   This means, that
$$ \E\left [\sum_t \left (b_t - \frac{\mu(x^*) - \mu'+1}{2}\right)\right] \leq B(2^i)\ ,$$
where the summation runs over $t$ in the phase.  But this clearly means, using (\ref{eq:ttttt1}), that
$$ \E\left [\sum_t \frac{\mu(y_t)-\mu(x^*)}{2} \right] \leq B(2^i)\ .$$
But notice that $\E[v_t] = \E_{y_t}\E[v_t|y_t] = \E[\mu(y_t)]$.  Hence,
$$ \E\left [\sum_t \frac{v_t-\mu(x^*)}{2} \right] \leq B(2^i)\ .$$
In words, this says that the expected contribution of the \emph{right arm} to the regret (in the UDBD game) in phase $i$
is at most $B(2^i)$.
It remains to bound the expected contribution to the regret of the left bandit in phase $i$,
which is drawn by a distribution which assigns to all $x\in X$ a probability proportional to the frequency of $x$ 
played as the \emph{right arm} in the \emph{previous phase}.\footnote{If $X$ is infinite, to be precise we need to say that the distribution is also supported on the set of arms played on the right side in the previous phase.} By the principle of conditional expectation, and due to the linearity of the link function, the expected regret incurred by $x_t$ (in each step in the phase) is \emph{exactly} the average expected regret contributed by the right bandit in phase $i-1$,  and hence at most $B(2^{i-1})/2^{i-1}$.  This means that the total expected regret incurred by the left bandit in phase $i$ is bounded
by $2^i(B(2^{i-1})/2^{i-1}) = 2B(2^{i-1}).$
Concluding, for a time horizon of $T$ uniquely decomposable as  $ 2+4+8+\cdots + 2^k + Z$ for some  integers $k\geq 1$ and $0\leq Z\leq 2^{k+1}$-1, the total expected regret is
given by the following function of $T$:
\begin{equation}\label{J}  1/2 + 3B(2) + 3B(4) + \cdots  +  3 B(2^{k}) + B(Z)\ .\end{equation}

The theorem claim is now obtained by simple analysis of (\ref{J}).  

\section{Proof of Theorem~\ref{thm:alg2bound}}\label{sec:proofthm2}

To follow the proof, it is important to understand that in  $\MultiSbm$ (Algorithm~\ref{alg:K TBMs}),  exactly one SBM is advanced at each step
in Line~\ref{line:SBMsAdvance}.
This means that the internal timer of each SBM may be (and usually is) strictly behind the iteration counter of the algorithm,
which is measured by the variable $t$.
Denote by $\newtau_x(t)$ the total number of times $S_x$ was advanced after $t$ iterations of the algorithm,
for all $x\in X$.

We now assume that all coin tosses are fixed (obliviously) in advance.  This allows us to discuss the regret of the SBM $S_x$ (line~\ref{line:SBMsInit}) after $T'$ \emph{internal} steps even if in practice the value $t$ for which $\newtau_x(t)=T'$
might be much larger than the total number of arm pulls $T$ , and in fact, may not even  exist.

Notice that internally, $S_x$ sees a world in which the reward is binary, and the expected reward 
for bandit $y\in X$ is exactly $(\mu(y) - \mu(x) + 1)/2$ at each internal step.    This is because
 when $S_x$ is advanced, the
left bandit (in the UBDB game) is identically $x$. It follows that in all SBMs, the suboptimalities are the same and are $\Delta_y/2$ for arm $y$.

For $x\in X$ and integer  $T'>0$, let
$$ R_x(T') = \frac 1 2\sum_{t: \newtau_x(t) \leq T', x_t=x}\Delta_{y_t}$$
In words, this is the contribution of the right bandit choices to the UBDB regret at all times $t$ for which the
left bandit is chosen as $x$, and $S_x$'s internal counter has not surpassed $T'$.
The expression $R_x(T')$ , by the last discussion, also measures the expected internal regret seen by $S_x$ after
$T'$ internal steps.
Similarly, we define
$$ R_{xy}(T') = \#\{t: \newtau_x(t)\leq T', x_t = x, y_t=y\}\Delta_{y}/2$$
This measures a part of $R_x(T')$ for which the right bandit is $y$.
 We start with an observation expressing the regret of the entire process as a function of the different $R_{xy}$'s.
It will be useful to define $\rho_{xy}(T') = \#\{t: \newtau_x(t)\leq T', x_t = x, y_t=y\}$, so that $R_{xy}(T') = \rho_{xy}(T')\Delta_{y}/2$.
\begin{obs} 
For any $T\geq 1$, the total regret  $R(T)$ of $\MultiSbm$
after $T$ steps satisfies
$ \left| R(T) - 2\sum_{x\in X} \sum_{y\in X} R_{xy}(\newtau_x(T)) \right| \leq 0.5 $.
\end{obs}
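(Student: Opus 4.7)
The plan is to unroll both sides as sums over time steps $t \leq T$ and then exploit the algorithm's defining identity $x_{t+1} = y_t$ to obtain a telescoping cancellation. First, I would interpret $R(T)$ (at the level of the $\Delta_x$ quantities) as $R(T) = \tfrac{1}{2}\sum_{t=1}^{T}(\Delta_{x_t}+\Delta_{y_t})$, which is the pseudo-regret identity $\mu(x^*) - (\mu(x_t)+\mu(y_t))/2 = (\Delta_{x_t}+\Delta_{y_t})/2$; any slack coming from replacing utilities by their means should be absorbed into the constant $0.5$ on the right-hand side (or vanish if the observation is read as a pseudo-regret statement).

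Next, I would simplify $\sum_{x,y} R_{xy}(\newtau_x(T))$. The key combinatorial fact is that $\rho_{xy}(\newtau_x(T)) = \#\{t\leq T : x_t=x,\ y_t=y\}$. This holds because (a) for any $t\leq T$ with $x_t=x$, the counter $\newtau_x(t)\leq \newtau_x(T)$ trivially, and (b) for any $t>T$ with $x_t=x$, the $\advnce$ at time $t$ forces $\newtau_x(t)\geq \newtau_x(T)+1$, so such $t$ are excluded. Consequently
\[
2\sum_{x\in X}\sum_{y\in X} R_{xy}(\newtau_x(T)) \;=\; \sum_{x,y}\rho_{xy}(\newtau_x(T))\,\Delta_y \;=\; \sum_{t=1}^{T}\Delta_{y_t}.
\]

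Subtracting, I get
\[
R(T) - 2\sum_{x,y}R_{xy}(\newtau_x(T)) \;=\; \tfrac{1}{2}\sum_{t=1}^{T}\bigl(\Delta_{x_t}-\Delta_{y_t}\bigr).
\]
Now I would invoke the algorithm's update rule $x_{t+1}=y_t$, which gives $\Delta_{x_{t+1}}=\Delta_{y_t}$ for $t\geq 1$, so the inner sum telescopes to $\Delta_{x_1}-\Delta_{y_T}$. Finally, since $\mu(\cdot)\in[0,1]$ we have $\Delta_x\in[0,1]$ for every arm, and thus $|\Delta_{x_1}-\Delta_{y_T}|\leq 1$, yielding the bound $0.5$.

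The only subtle step is the indexing identity $\rho_{xy}(\newtau_x(T))=\#\{t\leq T: x_t=x, y_t=y\}$, which is really the crux: it hinges on the fact that $\MultiSbm$ advances exactly one SBM per round (the one indexed by the current left arm). Once that bookkeeping is in place, the telescoping and the $[0,1]$ range of the $\Delta_x$'s finish the proof immediately.
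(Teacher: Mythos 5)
Your proof is correct, and it supplies exactly the argument the paper leaves implicit (the observation is stated in Appendix~C without proof): the bookkeeping identity $\rho_{xy}(\newtau_x(T))=\#\{t\le T:\ x_t=x,\ y_t=y\}$, the telescoping $\sum_{t=1}^{T}(\Delta_{x_t}-\Delta_{y_t})=\Delta_{x_1}-\Delta_{y_T}$ via $x_{t+1}=y_t$, and the bound $|\Delta_{x_1}-\Delta_{y_T}|\le 1$ are evidently what the authors intended, given the constant $0.5$. One caveat: your parenthetical suggestion that the slack from replacing the realized utilities $u_t,v_t$ by their means could be ``absorbed into the constant $0.5$'' does not work --- that slack can be of order $T$ --- so the observation must be read as the pseudo-regret identity $R(T)=\tfrac12\sum_{t\le T}(\Delta_{x_t}+\Delta_{y_t})$, which is consistent with the paper's usage (the observation is only ever applied in expectation). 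With that reading, which you do offer as the alternative, your argument is complete.
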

We conclude that in order to bound the expected regret $R(T)$ it suffices to bound the expressions $\E[R_{xy}(\newtau_x(T))]$. By using the upper bound of $\newtau_x(T) \leq T$, we get the trivial bound  for $\E[R(T)]$ of $K$ times the expected regret of a single machine. The main insight is to exploit  the fact that typically, $\newtau_x(T)$ is  order of  $\ln  T$ for suboptimal $x$.  
We begin with the observation that for any fixed $x,y\in X$ ($x$ suboptimal), $s\geq 8 \alpha $,
\begin{align}\Pr[ &R_{xy}(T) \geq  (s \ln T) / \Delta_y  ]  \nonumber \\
&= \Pr[ R_{xy}(T) \geq  ((s/2) \ln T) / (\Delta_y/2)  ] \nonumber \\
&= \Pr[ \rho_{xy}(T)\geq  ((s/2) \ln T) / (\Delta_y/2)^2  ]  \nonumber\\
&\leq
\left((s/4)\ln T)/(\Delta_y/2)^2\right)^{-\alpha} \leq (s\ln T)^{-\alpha}\  \label{ttttt}
\end{align}
This is immediate from the 
$\alpha$-robustness of the SBM and the fact we choose $\alpha>2$. For the same assumption on $s$ and $x,y$ and using the union bound,
\begin{eqnarray}
\Pr\left[ \exists  p\in\{0,\dots, \lceil \ln\ln T\rceil\}:  R_{xy}\left(e^{e^p} \right) \geq  s \cdot p/\Delta_y   \right]  \nonumber \\ 
\leq  2s^{-\alpha}\   \label{gtt}
\end{eqnarray}
We now bound the quantity $\newtau_x(T)$ for any nonoptimal fixed $x$. Using the (trivial) fact that all $z\in X$ satisfy
$\newtau_{z}(T) \leq T$, together with the fact that SBM $S_x$ is advanced in each iteration only if $x$ was the right bandit in the previous one, we have that for all suboptimal $x$,
\begin{align}
&\Pr [ \newtau_x(T)  \geq (s K\ln T) /\Delta_x^2 ]   \nonumber \\
 &\leq \sum_{z \in X} \Pr\left[  R_{zx}(T)  \geq (s \ln T) /\Delta_x  \right]  \leq K/(s\ln T)^\alpha, \label{ttr} 
\end{align}
where the rightmost inequality is by union bound and (\ref{ttttt}).
Fix some $x,y \in X$ ($x$ suboptimal). The last two inequalities give rise to a random variable $Z$ defined as the minimal scalar for which we have 
$$ \forall T' \in [e,e^{e},e^{e^2},\ldots,e^{e^{\lceil \ln\ln(T) \rceil}}], $$
$$ \newtau_x(T) \leq  (Z K \ln T)  /\Delta_x^2, \ \ R_{xy}(T') \leq  (Z\ln T')/\Delta_y$$
By (\ref{gtt})-(\ref{ttr}) we have that for all  $s\geq 8 \alpha $, $Pr[Z\geq s] \leq 2s^{-\alpha} + K(s\ln T)^{-\alpha}$.
Also, conditioned on the event that $\{Z \leq s\}$ we have that 
$R_{xy}(\newtau_x(T)) \leq R_{xy}^s \eqdef   s\cdot e \cdot \ln( (s K \ln T ) /\Delta_x^2 )/\Delta_y$, which is
$O\left( s\Delta_y^{-1} \left( \ln \ln T+\ln K +\ln  s+\ln(1/\Delta_x) \right)  \right) $.
Combining, $ \E[R_{xy}(\newtau_x(T))]$ is bounded above by:
\begin{align}
&R_{xy}^{8 \alpha -1} + \sum_{i=0}^{\infty} R_{xy}^{8\alpha+i} (2(8\alpha+i)^{-\alpha} + K((8\alpha+i)\ln T)^{-\alpha})\ .\nonumber
\end{align}
For $\alpha=\max\{3,2+(\ln K)/\ln \ln T)\}$, it is easy to verify that the last expression converges to $O(R_{xy}^{8 \alpha })$, hence
$$ \E[R_{xy}(\newtau_x(T))] = O\left(  \alpha \Delta_y^{-1} \left( \ln\ln T +\ln K +\ln(1/\Delta_x) \right) \right).$$
Concluding, the total expected regret $\E[R]$  is at most $0.5 + \E[R_{x^*} + \sum_{x,y \in X\setminus \{x^*\}} R_{xy}]$,
 clearly proving the theorem.

\section{Extension to more General Models}\label{sec:more_general}
Assume the setting of Section~\ref{sec:stochastic_finite}.
In this section we assume for simplicity that for any $t$ and any choice of $x_t, y_t$, the utilities are deterministically $u_t=\mu(x_t), v_t = \mu(y_t)$.
In \cite{DBLP:conf/icml/YueJ11}, the dueling bandit problem is presented where a more relaxed assumption is made on the probabilities of the outcomes of duels. Each pair of arm $x,y$ is assigned a parameter $\Delta(x,y)$ such that the probability of $x$ being chosen when dueling with $y$ is $0.5+\Delta(x,y)$. It is assumed that there exists some order $\succ$ over the arms and the $\Delta$'s hold two properties.
\begin{itemize}
\item \emph{(Relaxed) Stochastic Transitivity}: For some $\gamma\geq 1$ and any pair $x^* \succ x \succ y$ we have $\gamma\Delta(x^*,y) \geq \max\{\Delta(x^*,x),\Delta(x,y)\}$. 

\item \emph{(Relaxed) Stochastic Triangle Inequality}: For some $\gamma \geq 1$ and any pair $x^* \succ x \succ y$ we have $\gamma\Delta(x^*,y) \leq \Delta(x^*,x)+\Delta(x,y)$. 
\end{itemize}

We have analyzed  $\MultiSbm$ (Algorithm~\ref{alg:K TBMs})
under the assumption that $\Delta(x,y)=(\mu(x)-\mu(y))/2$. It can be easliy verified that our proof holds for arbitrary $\Delta$'s under the following assumption:
\begin{itemize}
\item \emph{(Relaxed) Extended Stochastic Triangle Inequality}. For some $\gamma \geq 1$, and any pair $x,y$ (where it does not necessarily hold that $x \succ y$) it holds that $\gamma\Delta(x^*,y) \leq \Delta(x^*,x)+\Delta(x,y)$.
\end{itemize}

This property is clearly held for $\Delta(x,y)=(\mu(x)-\mu(y))/2$. However, it holds for a wider family of $\Delta$'s. For example, it holds for $\Delta(x,y)=\mu(x)/(\mu(x)+\mu(y))$, assuming all $\mu$'s are in the region $[1/\gamma,1]$. The effect of $\gamma$ to the regret is given in the following theorem:

\begin{thm}\label{thm:alg2bound_extended} 
Assume the probability for the outcome of a duel is defined according to $\Delta(x,y)$, where $\Delta$ has the Relaxed Extended Stochastic Triangle Inequality with parameter $\gamma$. The total expected regret of $\MultiSbm$ 
in the UBDB game is asymptotic to
$$\gamma H\alpha \left(K\ln(K)+K\ln\ln(T) +\sum_{x \in X\setminus\{x^*\}} \ln(1/\Delta_x)\right)  +$$
$$  \ln(T)  H \alpha  $$
assuming the invoked MAB policy is $\alpha$-robust for $\alpha=\max(3, \ln(K)/\ln\ln(T))$.
\end{thm}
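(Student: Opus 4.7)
The plan is to lift the proof of Theorem~\ref{thm:alg2bound} from the linear-link setting to the generalized setting by substituting the correct internal suboptimality gap for the quantity $\Delta_y/2$ that implicitly appeared everywhere in that proof, and absorbing the $\gamma$-loss introduced by the extended triangle inequality at the single step that converts internal gaps into UBDB gaps.

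First I would observe that in $\MultiSbm$ each SBM $S_x$ faces an internal MAB in which the expected reward of pulling arm $y$ is $1/2+\Delta(y,x)$. Applying the extended triangle inequality at $y=x^*$ shows that $x^*$ is the internal optimal arm of $S_x$, and that the internal suboptimality gap of an arm $y$ is
$$ g_y^{(x)} \;\eqdef\; \Delta(x^*,x)+\Delta(x,y) \;\geq\; \Delta_y/\gamma, $$
where $\Delta_y \eqdef \Delta(x^*,y)$. The crucial special case is $x=x^*$: then $g_y^{(x^*)}=\Delta_y$ exactly, so $S_{x^*}$ incurs no $\gamma$-loss at all.

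Second, I would re-run the proof of Theorem~\ref{thm:alg2bound} essentially verbatim, replacing every occurrence of the implicit internal gap $\Delta_y/2$ by $g_y^{(x)}$. The three concentration statements (\ref{ttttt})--(\ref{ttr}), the dyadic definition of the scalar $Z$, and the geometric-sum argument that turns high-probability events into expectation bounds on $R_{xy}(\newtau_x(T))$ all carry over when stated in terms of the $g_y^{(x)}$. The only place $\gamma$ enters is at the final step, when the internal count $\rho_{xy}$ is multiplied by $\Delta_y$ to form the UBDB regret contribution $R_{xy}=\rho_{xy}\Delta_y$: the elementary inequality $\Delta_y/g_y^{(x)}\leq \gamma$ then inserts a factor of $\gamma$.

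Third, I would decompose $\sum_{x\in X} R_x(\newtau_x(T))$ into the term $x=x^*$ and the sum over $x\neq x^*$. The $x=x^*$ term inherits the original bound $O(H\alpha\ln T)$ with no factor of $\gamma$, because $g_y^{(x^*)}=\Delta_y$; this produces the $\ln(T)H\alpha$ summand in the theorem. For each $x\neq x^*$, bounding $\newtau_x(T)$ (by robustness of the other SBMs that have $x$ as a suboptimal internal arm, aggregated by a union bound as in (\ref{ttr})) and then applying robustness of $S_x$ itself over the $\newtau_x(T)$ internal steps gives, after the dyadic tail integration, a contribution of order $\gamma H\alpha(\ln K+\ln\ln T+\ln(1/\Delta_x))$ per $x$. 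Summing over $x\neq x^*$ reproduces the first summand of the theorem.

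The main obstacle is the bookkeeping: one must verify that a single factor of $\gamma$ suffices in the final bound, which requires using the triangle inequality $g_y^{(x)}\geq\Delta_y/\gamma$ only at the last moment (converting $\rho_{xy}$-counts into UBDB regret), and resisting the temptation to apply it prematurely inside the robustness inequality or inside the bound on $\newtau_x(T)$, where each premature use would contribute an additional factor of $\gamma$ that is not present in the statement.
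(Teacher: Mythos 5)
The paper does not actually contain a proof of this theorem --- it states only that ``our proof holds for arbitrary $\Delta$'s'' under the extended triangle inequality and explicitly defers the proof to the full version --- so there is nothing to compare against line by line. Your proposal follows exactly the route the paper gestures at (re-run the proof of Theorem~\ref{thm:alg2bound} with the internal gap $g_y^{(x)}=\Delta(x^*,x)+\Delta(x,y)$ in place of $\Delta_y/2$), and several of its ingredients are right: $x^*$ is the unique internal optimum of every $S_x$ because $g_y^{(x)}\geq\Delta_y/\gamma>0$; the SBM $S_{x^*}$ sees internal gaps exactly equal to $\Delta_y$ and therefore contributes the $\gamma$-free $O(H\alpha\ln T)$ term; and the left-arm accounting via $x_{t+1}=y_t$ is link-independent, so no comparison of $\Delta_x$ with $g_y^{(x)}$ is needed there. (Note in passing that the inequality you use, $g_y^{(x)}\geq\Delta_y/\gamma$, is the transpose of what the appendix literally writes, namely $\gamma\Delta(x^*,y)\leq\Delta(x^*,x)+\Delta(x,y)$; the literal version forces $\gamma=1$ by taking $x=x^*$, so your reading is surely the intended one, but you should say so.)

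The genuine gap is your central quantitative claim that ``a single factor of $\gamma$ suffices'' because $\gamma$ enters only when $\rho_{xy}$ is multiplied by $\Delta_y$. The $\alpha$-robustness property controls $\rho_{xy}(T')$ in terms of the \emph{internal} gap: it gives $\rho_{xy}(T')=O\left(\alpha\ln T'/(g_y^{(x)})^2\right)$ with high probability. Multiplying by $\Delta_y$ and using $\Delta_y\leq\gamma g_y^{(x)}$ once yields $R_{xy}(T')=O\left(\gamma\alpha\ln T'/g_y^{(x)}\right)$, and to land on the theorem's $H=\sum_y 1/\Delta_y$ you must apply $1/g_y^{(x)}\leq\gamma/\Delta_y$ a second time, producing $\gamma^2$ on the lower-order terms, not $\gamma$. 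This second application is not a ``premature'' use you can resist: the quantity $\Delta_y/(g_y^{(x)})^2$ genuinely requires two conversions, since $g_y^{(x)}=\Delta_x+\Delta(x,y)$ can be as small as $\Delta_y/\gamma$ even when $\Delta_y$ is tiny and $\Delta_x$ is large (take $\Delta(x,y)$ negative). So either you need an additional structural fact bounding $g_y^{(x)}$ from below by $\Delta_y$ itself (which the extended triangle inequality alone does not give), or you should state the bound with $\gamma^2$ --- in which case the discrepancy lies in the theorem statement, which the paper provides no proof to adjudicate. As written, your argument establishes the theorem with $\gamma$ replaced by $\gamma^2$, and the step claiming the improvement to a single $\gamma$ is unsupported.
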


Notice that $\gamma$ does not enter the summand of $\ln(T)$, meaning that for large values of $T$, the regret is unaffected by $\gamma$. We defer the proof of the theorem to the full version of the paper.

\section{Proof of Observation~\ref{obs:equiv}}\label{sec:proof:obs:equiv}
By definition,  
$$\E[\Rchoicet|(x_t, y_t)] = \mu(x^*) - \E[\Uchoicet|(x_t, y_t)] \ .$$ 
But now note that by the definition of the link function
and of $\Uchoicet$, 
$$ \E[\Uchoicet|(x_t, y_t)] = \phi(u_t,v_t)u_t + \phi(v_t,u_t)v_t \geq \frac{u_t+v_t}{2} $$
where we used the assumption that for $u>v$, $\phi(u,v)>1/2$. Now notice that the expression on the right is exactly $\E[ \Umean|(x_t, y_t)]$. Hence,
$$\E[\Rchoicet|(x_t, y_t)] \leq \mu(x^*) - \E[\Umeant|(x_t, y_t)] = \E[\Rmeant] \ .$$

\end{document}